\newdimen\nodeDist
\newcommand{\blind}{1}
\tikzset{decorate sep/.style 2 args=
{decorate,decoration={shape backgrounds,shape=circle,shape size=#1,shape sep=#2}}}
\newcommand{\inv}{^{\raisebox{.2ex}{$\scriptscriptstyle-1$}}}
\newcommand{\minus}{\scalebox{0.45}[1.0]{$-$}}
\algnewcommand{\minus}{\scalebox{0.45}[1.0]{$-$}}
\DeclareMathOperator{\iid}{\stackrel{\mbox{\tiny iid} }{\sim}}
\newcommand{\X}{\mathbf{X}}
\newcommand{\w}{\mathrm{w}}
\newcommand{\cc}{\mathbf{c}}
\newcommand{\y}{\mathrm{y}}
\newcommand{\N}{\mbox{{\small\textsc{N}}}}
\newcommand{\E}{\mbox{E}}
\newcommand{\res}{\mathrm{r}}
\newcommand{\x}{\mathrm{x}}
\newtheorem{theorem}{Theorem}
\newtheorem{proposition}{Proposition}
\newtheorem*{lemma*}{Lemma}
\newtheorem{lemma}{Lemma}
\newtheorem{condition}{Condition}
\newenvironment{remark}[1][Remark]{\begin{trivlist}
\item[\hskip \labelsep {\bfseries #1}]}{\end{trivlist}}
\newcommand\blank{{}\cdot {}}
\let\originalleft\left
\let\originalright\right
\def\left#1{\mathopen{}\originalleft#1}
\def\right#1{\originalright#1\mathclose{}}
\definecolor{block}{RGB}{0,162,232}
\def\blockaux#1(#2,#3)#4(#5,#6){%
  \draw[fill={#1}, draw=white]
  let \p1=(#2,#3),
      \p2=(#5,#6),
      \p3=(#2+#5,#3+#6),
      \p4=(#2+#5/2,#3+#6/2)
  in
    (\p1) rectangle (\p3)
    (\p4) node {$#4$}
  ;%
}
\begin{document}

\def\spacingset#1{\renewcommand{\baselinestretch}%
	{#1}\small\normalsize} \spacingset{1}


\if1\blind
	{
		\title{\bf Stochastic tree ensembles \\ for regularized nonlinear regression}
		 \author{Jingyu He\\
			City University of Hong Kong\\
			and \\
			P. Richard Hahn\footnote{Thanks to Nick Polson for helpful comments. This work was partially supported by a Facebook Research Award. }\\
			Arizona State University}
		\maketitle
	} \fi

\if0\blind
{
	\title{\bf Stochastic tree ensembles \\ for regularized nonlinear regression}
	\maketitle
} \fi

\bigskip
\begin{abstract}
	This paper develops a novel stochastic tree ensemble method for nonlinear regression, referred to as Accelerated Bayesian Additive Regression Trees, or XBART. By combining regularization and stochastic search strategies from Bayesian modeling with computationally efficient techniques from recursive partitioning algorithms, XBART attains state-of-the-art performance at prediction and function estimation. Simulation studies demonstrate that XBART provides accurate point-wise estimates of the mean function and does so faster than popular alternatives, such as BART, XGBoost, and neural networks (using Keras) on a variety of test functions. Additionally, it is demonstrated that using XBART to initialize the standard BART MCMC algorithm considerably improves credible interval coverage and reduces total run-time. Finally, two basic theoretical results are established: the single tree version of the model is asymptotically consistent and the Markov chain produced by the ensemble version of the algorithm has a unique stationary distribution. 

\end{abstract}

\noindent%
{\it Keywords:}  Machine learning; Markov chain Monte Carlo; Regression trees; Supervised learning; Bayesian
\vfill

\newpage
\spacingset{1.5} 

\newpage

\section{Introduction}
Tree-based algorithms for supervised learning, such as Classification and Regression Trees (CART) \citep{breiman1984classification}, random forests \citep{breiman2001random} and boosted regression trees \citep{friedman2002stochastic, chen2016xgboost}, are popular due to their speed and accuracy in out-of-sample prediction tasks. For instance, in section 10.7 of \cite{hastie2005elements}, an influential textbook, we read
\small
\begin{quote}
	Of all the well-known learning methods, decision trees come closest to meeting the requirements for serving as an off-the-shelf procedure for data mining. They are relatively fast to construct and they produce interpretable models (if the trees are small). They naturally incorporate mixtures of numeric and categorical predictor variables and missing values. They are invariant under (strictly monotone) transformations of the individual predictors. As a result, scaling and/or more general transformations are not an issue, and they are immune to the effects of predictor outliers. They perform internal feature selection as an integral part of the procedure. They are thereby resistant, if not completely immune, to the inclusion of many irrelevant predictor variables. These properties of decision trees are largely the reason that they have emerged as the most popular learning method for data mining. Trees have one aspect that prevents them from being the ideal tool for predictive learning, namely inaccuracy. They seldom provide predictive accuracy comparable to the best that can be achieved with the data at hand. Boosting decision trees improves their accuracy, often dramatically. 
\end{quote}
\normalsize
In 2016, the XGBoost algorithm was introduced \citep{chen2016xgboost} and has quickly become a go-to tool for data scientists working in industry:
\small
\begin{quote}
	Take the challenges hosted by the machine learning competition site Kaggle for example. Among the 29 challenge winning solutions 3 published at Kaggle’s blog during 2015, 17 solutions used XGBoost. Among these solutions, eight solely used XGBoost to train the model, while most others combined XGBoost with neural nets in ensembles. For comparison, the second most popular method, deep neural nets, was used in 11 solutions. The success of the system was also witnessed in KDDCup 2015, where XGBoost was used by every winning team in the top 10 \citep{bekkerman2015}.
\end{quote}
\normalsize
\cite{pafka2015} performed simulation comparisons of the performance of XGBoost and other gradient boosting methods and random forests, and conclude that XGBoost is fast, memory efficient and of high accuracy. \cite{brownlee2016} summarizes insightful quotes and praises from Kaggle competition winners on XGBoost. 

At the same time, an older tree-based method, random forests \citep{breiman2001random}, is often faster and competitively accurate, especially on low-signal data sets. In Section 15.2 of Elements of Statistical Learning \cite{hastie2005elements}, the authors remark:
\begin{quote}
In our experience random forests do remarkably well, with very little tuning required.
\end{quote}
In short, it would not be much of an exaggeration to say that XGBoost and random forests are the standard bearers for classification and regression problems with the unstructured (tabular) mixed numeric-categorical data that are common in many industries. 

Still, there may be room for improvement, in at least two respects. One, it would be preferable to have a single method that could replace the two, a method that works well on both high and low signal data sets. To some extent XGBoost can avoid overfitting, and hence achieve model fits more like random forests, by choosing a less aggressive learning rate (one of the algorithms tunable parameters), but selecting this parameter by cross-validation can be costly, essentially undoing the tremendous speed advantage for which XGBoost is famous. Two, in some situations it would be useful to have a prediction interval in addition to a point estimate. Although some approaches, such as conformal prediction \citep{lei2018distribution}, can be used to augment XGBoost or random forests, doing so comes at a steep computational cost. In these two respects --- regularization that is adaptive to problem difficulty (data quality) and availability of an associated uncertainty measure --- a third popular tree ensemble method arises as a competitor:  Bayesian additive regression trees (BART) \citep{chipman2010bart} is a model-based method that is comparatively robust to the choice of tuning-parameters and, as a Bayesian method, provides posterior uncertainty quantification. Due to these strengths, BART has inspired a considerable body of research in recent years; for a comprehensive review of this literature, see \cite{linero2017review} and \cite{hill2020bayesian}. 

However, relative to random forests and XGBoost, BART models take much longer to fit because the underlying random walk Metropolis-Hastings Markov chain Monte Carlo (MCMC) algorithm can be slow to converge. The present paper develops a novel stochastic tree ensemble method, referred to as Accelerated Bayesian Additive Regression Trees, or XBART, that greatly reduces the time needed to fit BART models. XBART takes its regularization and parameter sampling steps from BART, while retaining the recursive tree-growing process from traditional tree-based methods. The result is a stochastic tree sampling algorithm that is substantially faster than BART while retaining its state-of-the-art predictive accuracy. A general form of the recursive stochastic algorithm is presented in Section \ref{sec:XBART}; Section \ref{sec:gaussian} tailors the algorithm to a regression model with additive Gaussian errors. Extensive simulation studies and empirical data examples demonstrate the efficacy of the new approach in Section \ref{sec:example}.

Furthermore, XBART works not only as a stand-alone machine learning algorithm but can be used to initialize a BART MCMC sampler (warm-start BART, Section \ref{sec:warm_start}), resulting in faster fully Bayesian inference with improved posterior exploration as indicated by posterior credible intervals (for the mean function) with better coverage (for a fixed number of posterior samples).

Finally, basic theoretical properties of the new algorithm are investigated in Section \ref{sec:theory}. First, recent consistency results concerning CART and random forests \citep{scornet2015consistency} are shown to apply to XBART with a single tree. Second, it is shown that the XBART forest algorithm defines a finite-space Markov chain with stationary distribution.

The XBART algorithm was first presented in \cite{he2019xbart}; relative to this initial description, the version of XBART presented here is expressed in more generality, the simulation studies are more extensive, and the real data examples, the warm-start strategy and the theoretical results are entirely new.


\section{A recursive, stochastic fitting algorithm}\label{sec:XBART}

The goal of supervised learning is to predict a scalar random variable $Y \in \mathcal{Y}$ by a length $p$ covariate vector $\x = (x_1, \cdots, x_p)^t \in \mathcal{X}$. This section presents a stochastic, recursive algorithm for supervised learning with decision or regression trees. The algorithm is first developed for fitting a single tree and then extended to tree ensembles, or forests. This section describes the algorithm in terms that apply to a general likelihood function that can be used for classification or regression; the remainder of the paper focuses on regression with a continuous univariate response variable.

\subsection{Fitting a single tree recursively and stochastically}\label{sec:gfr}
A tree $T$ is a set of split rules defining a rectangular partition of the covariate space to $\{\mathcal{A}_{1}, \cdots, \mathcal{A}_{B}\}$, where $B$ is the total number of terminal nodes of tree $T$. The split rule is a pair of $(x_i, c)$, indicating variable to split and the value at which it cuts. Each rectangular cell $\mathcal{A}_{b}$ is associated with leaf parameter $\mu_{b}$ and the pair $(T, \boldsymbol{\mu})$ parameterizes a step function $g(\cdot)$ on covariate space:
\begin{equation*}
	g\left(\x ; T, \boldsymbol{\mu}\right) = \mu_{b}, \quad \text{if } \x \in \mathcal{A}_{b}
\end{equation*}
 where $\boldsymbol{\mu} = (\mu_{1}, \cdots, \mu_{B})^t$ denotes a vector of all leaf parameters. 
Figure \ref{fig:tree_example} depicts a simple regression tree using two variables $\x = (x_1, x_2)^t \in [0,1]\times[0,1]$. The left panel shows a split rule structure and the right panel plots the corresponding partition of the space and the associated leaf parameters. The response distribution at a point $\x$, upon which predictions are based,  is determined by the leaf parameter, $\mu_b$, associated with the cell containing $\x$. 
\begin{figure}[!htbp]
	\begin{subfigure}{0.5\textwidth}
		\begin{center}
			\begin{tikzpicture}[
					scale=0.8,
					node/.style={%
							draw,
							rectangle,
						},
					node2/.style={%
							draw,
							circle,
						},
				]

				\node [node] (A) {$x_1\leq0.8$};
				\path (A) ++(-135:\nodeDist) node [node2] (B) {$\mu_{1}$};
				\path (A) ++(-45:\nodeDist) node [node] (C) {$x_2\leq0.4$};
				\path (C) ++(-135:\nodeDist) node [node2] (D) {$\mu_{2}$};
				\path (C) ++(-45:\nodeDist) node [node2] (E) {$\mu_{3}$};

				\draw (A) -- (B) node [left,pos=0.25] {no}(A);
				\draw (A) -- (C) node [right,pos=0.25] {yes}(A);
				\draw (C) -- (D) node [left,pos=0.25] {no}(A);
				\draw (C) -- (E) node [right,pos=0.25] {yes}(A);
			\end{tikzpicture}
		\end{center}

	\end{subfigure}
	\begin{subfigure}{0.5\textwidth}

		\begin{center}
			\begin{tikzpicture}[scale=3]
				\draw [thick, -] (0,1) -- (0,0) -- (1,0) -- (1,1)--(0,1);
				\draw [thin, -] (0.8, 1) -- (0.8, 0);
				\draw [thin, -] (0.0, 0.4) -- (0.8, 0.4);
				\node at (-0.1,0.4) {0.4};
				\node at (0.8,-0.1) {0.8};
				\node at (-0.1, -0.1) {0};
				\node at (1, -0.1) {1};
				\node at (-0.1,1) {1};
				\node at (0.5,-0.2) {$x_1$};
				\node at (-0.3,0.5) {$x_2$};
				\node at (0.9,0.5) {$\mu_{1}$};
				\node at (0.4,0.7) {$\mu_{2}$};
				\node at (0.4,0.2) {$\mu_{3}$};
			\end{tikzpicture}
		\end{center}
	\end{subfigure}
	\caption{A regression tree on two variables depicting two split rules (cutpoints) and three leaf nodes.}\label{fig:tree_example}
\end{figure}
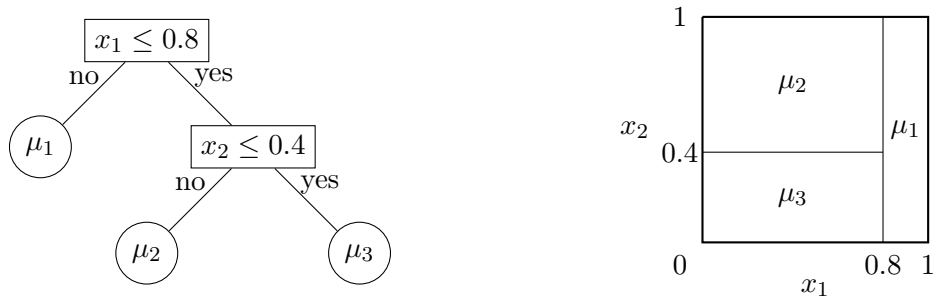

Regression trees may be ``grown'' by recursive partitioning:  the best split rule (cutpoint) is determined by examining all cutpoint candidates along each variable, where best is defined by a specified split criterion. The data set is then divided according to this split rule (cutpoint) and the process is repeated on the resulting disjoint subsets until a stopping condition is met. Algorithm \ref{alg:recursion} depicts pseudocode for recursive partitioning.

\begin{algorithm}
\small
\caption{Recursive partitioning}\label{alg:recursion}
	\begin{algorithmic}[1]
	\State A length-$n$ response vector $\y$ and a $n \times p$ predictor matrix $\X$.
		\Procedure{RecursivePartition}{$\mathcal{I}$} \Comment{Row indices $\mathcal{I}$, a subset of $1, \dots, n$.}
			\If{Stop($\mathcal{I}$) = FALSE} 
				\State Compute split criterion based on partitions of $\y$ corresponding to each candidate cutpoint defined by $\mathcal{I}$ and $\X$. 
				\State Define disjoint subsets $\mathcal{I}_{\mbox{left}}$ and $\mathcal{I}_{\mbox{right}}$ based on the optimal cutpoint.
				\State \Call{RecursivePartition}{$\mathcal{I}_{\mbox{left}}$}
				\State \Call{RecursivePartition}{$\mathcal{I}_{\mbox{right}}$}	
		\Else
			
		\Return a single prediction based on observations $y_i$ for $i \in \mathcal{I}$.
		\EndIf
		\EndProcedure
	\end{algorithmic}
\end{algorithm}

Most widely-used regression tree methods are some version of Algorithm \ref{alg:recursion}, differing from one another in terms of the split criteria and stopping conditions they employ. Split criteria are functions of a split rule (a cutpoint), measuring homogeneity within the two child nodes produced by the implied split; for instance, CART \citep{breiman1984classification} uses a squared error  split criteria.  Frequently used stop conditions include maximum depth of a tree, minimal number of data observations within a leaf node, or a threshold for percent change of split criteria from parent to child nodes. 

%

The XBART algorithm is a modification of Algorithm \ref{alg:recursion} in which the partition cutpoints and also the stopping condition are determined \textit{stochastically}. The splitting criterion will be motivated by analogy with an integrated likelihood calculation arising from the BART MCMC algorithm \citep{chipman2010bart}. Here we present the result in a highly generic form; details specific to the Gaussian mean regression model are given in Section \ref{sec:gaussian}.

\subsubsection{The XBART marginal likelihood split criterion}
To establish notation: The predictor matrix $\X$, with dimension $n\times p$, consists of $n$ observations and $p$ variables. Tree regression methods partition the data according to predictor values, which are recorded in the predictor matrix $\X$. The set of split rule candidates, denoted $\mathcal{C}$, is defined by $\X$ with each element indexed as $c_{jk}$ where $j = 1, \dots p$ indexes a variable (column) of $\X$ and $k$ indexes a set of candidate cutpoints (row) of $\X$. In this paper, we use the term \textit{cutpoint} to refer to $c_{jk}$, with the understanding that the value of $c_{jk}$ and the index $j$ jointly define a splitting rule.

Let $|\mathcal{C}|$ denote the total number of cutpoint candidates. Let $\Phi$ denote prior hyper-parameters and $\Psi$ denote model parameters; that is, $\Phi$ is fixed during the entire process fitting (such as prior parameters), while model parameters $\Psi$ are to be estimated (such as residual variance for the regression case)\footnote{The distinction between $\Phi$ and $\Psi$ is perhaps clearest in concrete instances of the algorithm; see section \ref{sec:gaussian} for an example of Gaussian regression.}.

Consider a likelihood $\ell(\y_b; \mu_b, \Psi_b)$ \textit{on one leaf} with vector of data observations within the leaf $\y_b$, leaf-specific parameter $\mu_b$ and additional model parameters $\Psi_b$. In the following text, we omit subscript $b$ for simplicity. The leaf parameter $\mu$ is given a prior $\pi(\mu ; \Phi)$, which induces a prior predictive distribution
\begin{equation}\label{eqn:marginallikelihood}
	m\left(\y ; \Phi, \Psi\right) := \int \ell \left(\mathrm{y}; \mu, \Psi \right) \pi\left(\mu ; \Phi\right) d\mu,
\end{equation}
This prior predictive distribution will define the XBART split criterion. Observe that this framework can accommodate many different models, defined by the choice of likelihood function $\ell(\mathrm{y}; \mu, \Psi)$: it could be Gaussian for regression (Section 3), a multinomial distribution for classification, etc.

Intuitively, the split criterion arises via Bayesian estimation of a single unknown parameter, the cutpoint defining the partition (which may be null, as in the data are not to be split at all). From this ``local'' perspective (ignoring its implications further down the tree), the posterior probability of each candidate cutpoint is determined according to the above prior predictive distribution (equivalently, the marginal likelihood). In more detail, a cutpoint $c_{jk}$ partitions the current node to left and right child nodes, with (sub)vectors $\y_{jk}^{(1)}$ and $\y_{jk}^{(2)}$, respectively. Assuming that observations in separate leaf nodes are independent, the joint prior predictive associated to this local Bayesian model is simply the product of the predictive distribution in each of the two partitions defined by $c_{jk}$:
\begin{equation}\label{eqn:general_split_criterion}
	L\left(c_{jk}\right) := m\left(\y_{jk}^{(1)} ; \Phi, \Psi \right)\times m\left(\y_{jk}^{(2)} ; \Phi, \Psi\right),
\end{equation}
which defines the split criterion for cutpoint $c_{jk}$. Similarly, the {\em null cutpoint} is defined as
\begin{equation}\label{eqn:general_split_criterion_nosplit}
	L\left(\emptyset\right) := |\mathcal{C}| \left(\frac{(1+d)^\beta}{\alpha} - 1\right) m\left(\y ; \Phi, \Psi\right)
\end{equation}
where $d$ is depth of the current node and $\alpha$, $\beta$ are hyper-parameters. Once the split criterion has been evaluated at all candidate cutpoints (including the null cutpoint), one of them is sampled with probability proportional to the split criterion value:
\begin{equation}\label{eqn:sample}
\begin{split}
	P(c_{jk}) &= \frac{L({c}_{jk})}{\sum_{c_{jk}\in \mathcal{C}} L({c}_{jk}) + L(\emptyset)}, \\ 
	P\left(\emptyset\right) &= \frac{ L(\emptyset)}{\sum_{c_{jk}\in \mathcal{C}} L({c}_{jk}) + L(\emptyset)}.
\end{split}
\end{equation}
The additional weight on the null cutpoint criterion, $ |\mathcal{C}| \left(\frac{(1+d)^\beta}{\alpha} - 1\right)$, was chosen to match the tree prior used in standard BART \citep{chipman2010bart}, in the following sense. Setting $m(\blank ; \Phi, \Psi) = 1$ for all cutpoints in $\mathcal{C}$ entails that the prior probability of splitting at one of the cutpoints is  $$1 - P(\emptyset) = \frac{|\mathcal{C}|}{|\mathcal{C}|\left(\frac{(1+d)^\beta}{\alpha} - 1\right) + |\mathcal{C}|}  = \alpha (1 + d)^{-\beta}.$$ 
Once the null cutpoint is sampled (or other stopping conditions are met), the recursion terminates, returning the leaf parameter $\mu$, sampled from the associated ``local'' Bayesian posterior. Algorithm \ref{alg:GFR} presents this algorithm, which we call {\sc GrowFromRoot}.  Note that Algorithm \ref{alg:GFR} expresses $m(\y ; \Phi, \Psi)$ in terms of a sufficient statistic $s(\y)$, which can aid computation; for a concrete example see Section \ref{sec:sample}.



\begin{algorithm*}[!htbp]
	\small
	\caption{GrowFromRoot}\label{alg:GFR}
	\begin{algorithmic}[1]
		\Procedure{GrowFromRoot}{$\y, \X,\Phi,\Psi$, $d$, $T$, {\tt node}} \\
		\textbf{outcome} Modifies $T$ by adding nodes and sampling associated leaf parameters $\boldsymbol{\mu}$.
		\If{the stopping conditions are met} \State Go to step 15, update leaf parameter $\mu_{\mbox{{\tt node}}} $.
		\EndIf
		\State $s^{\emptyset} \gets s(\y, \X, \Psi, \mathcal{C}, \mbox{all})$. \Comment{Compute sufficient statistic of stop-splitting.}
		\For{$c_{jk} \in \mathcal{C}$}   \Comment{Loop over all cutpoint candidates.}
		\State $s_{jk}^{(1)} \gets s(\y, \X, \Psi, \mathcal{C}, j,k, \mbox{left})$.\Comment{Compute sufficient statistic of left candidate node.}
		\State $s_{jk}^{(2)}\gets s(\y, \X, \Psi, \mathcal{C}, j,k, \mbox{right})$.\Comment{Compute sufficient statistic of right candidate node.}
		\State Calculate $L(c_{jk}) = m\left(s_{jk}^{(1)} ; \Phi, \Psi \right)\times m\left(s_{jk}^{(2)} ; \Phi, \Psi\right)$
		\EndFor
		\State Calculate $L(\emptyset) = |\mathcal{C}| \left(\frac{(1+d)^\beta}{\alpha} - 1\right) m\left(s^{\emptyset} ; \Phi, \Psi\right).$
		\State Sample a cutpoint $c_{jk}$ proportional to integrated likelihoods 
		$$P(c_{jk}) = \frac{L({c}_{jk})}{\sum_{c_{jk}\in \mathcal{C}} L({c}_{jk}) + L(\emptyset)},$$
		$$ \text{or } P(\emptyset) = \frac{ L(\emptyset)}{\sum_{c_{jk}\in \mathcal{C}} L({c}_{jk}) + L(\emptyset)} \text{ for the null cutpoint.} $$
		\If{the null cutpoint is selected}
		\State $\mu_{\mbox{{\tt node}}} \gets$ SampleParameters($s^{\emptyset}$)
		\State \textbf{return}.
		\Else
		\State Create two new nodes, {\tt left\_node} and {\tt right\_node}, and grow $T$ by designating them as the current node's ({\tt node}) children.
		\State Partition the data $(\y, \X)$ into left $(\y_{\mbox{left}}, \X_{\mbox{left}})$and right $(\y_{\mbox{right}}, \X_{\mbox{right}})$ parts, according to the selected cutpoint $x_{ij'} \leq  x^*_{kj}$ and $x_{ij'} > x^*_{kj}$, respectively, where $x^*_{kj}$ is the value corresponding to the sampled cutpoint $c_{jk}$.
		\State  \Call{GrowFromRoot}{$\y_{\mbox{left}}, \X_{\mbox{left}},\Phi,\Psi$, $d+1$, $T$, {\tt left\_node}}.
		\State  \Call{GrowFromRoot}{$\y_{\mbox{right}}, \X_{\mbox{right}},\Phi,\Psi$, $d+1$, $T$, {\tt right\_node}}.
		\EndIf
		\EndProcedure
	\end{algorithmic}
\end{algorithm*}



\subsection{Tree ensembles}
Tree ensemble prediction methods combine $L$ decision or regression trees, $T_l$, $l = 1, \dots, L$ to produce a prediction function mapping $\mathcal{X} \rightarrow \mathcal{Y}$. Although the simplicity of single-tree methods such as CART have their appeal, the most accurate tree-based prediction methods are ensemble approaches: Random forests \citep{breiman2001random}, gradient boosting \citep{friedman2001greedy}, and BART \citep{chipman2010bart}. To extend the grow-from-root algorithm to tree ensembles (or ``forests'') we simply define the split criterion itself as a function of the ``leave-one-out forest''. Let $\mathcal{F} = \lbrace T_1, \dots, T_L \rbrace$ denote the complete forest and let $\mathcal{F}_{\minus h} = \mathcal{F} \setminus T_h$. Similarly, let $\mathcal{M} = \lbrace \boldsymbol{\mu}_1, \dots , \boldsymbol{\mu}_L \rbrace$ denote the set of associated leaf parameters and $\mathcal{M}_{\minus h} = \mathcal{M} \setminus \boldsymbol{\mu}_h$. One may then define a marginal likelihood as
\begin{equation}
	m\left(\y ; \Phi,  \lbrace \Psi, \mathcal{F}_{\minus h} \mathcal{M}_{\minus h}\rbrace\right) := \int \ell \left(\mathrm{y}; \mu, \lbrace \Psi, \mathcal{F}_{\minus h} \mathcal{M}_{\minus h}\rbrace\right) \pi(\mu ; \Phi) d\mu,
\end{equation}
provided the integral is well-defined. In this formulation, {\sc GrowFromRoot} may be conceptualized in terms of a Bayesian model with all but the $h$-th tree  known (including the associated leaf parameters). This is much like specifying a model in terms of full conditional distributions; in general this will not yield a stationary distribution, but in Section \ref{stationarity} we show that XBART does. In the case of additive ensembles for mean regression with additive Gaussian errors, these expression are particularly convenient (see Section \ref{sec:gaussian}). 

The XBART stochastic ensemble method is detailed in Algorithm \ref{alg:XBART}. It produces $I$ samples of the forest $\mathcal{F}$. We refer to one iteration of the algorithm, sampling all $L$ trees once, as a \textit{sweep}.  Additional (model-specific) parameters, $\Psi$, may be updated in between sampling each tree (for a total of $L$ updates per sweep), or in between sampling each forests (one update per sweep).

%
%
%

\begin{algorithm*}[!htbp]
	\small
	\caption{Accelerated Bayesian Additive Regression Trees (XBART)}\label{alg:XBART}
	\begin{algorithmic}[1]
		\Procedure{XBART}{$\y, \X, \Phi, L, I$} \\
		\textbf{output} $I$ posterior draws of a forest (and associated leaf parameters) comprising $L$ trees.
		\State Initialize $\Psi$ and all $T_h$, for $h = 1, \dots, L$.
		\For{$iter$ in 1 to $I$}
		\For{$h$ in 1 to $L$}
		\State Create {\tt new\_node}.
		\State Initialize tree $T_{h}^{(iter)}$ to the root node.
		\State  {\sc GrowFromRoot}$\left ( \y, \X,\lbrace \Psi, \mathcal{F}_{\minus h}, \mathcal{M}_{\minus h} \rbrace,\Phi, T_{h}^{(iter)},\text{{\tt new\_node}}  \right )$.
		\State Sample some elements of $\Psi$, in between of each tree.
		\EndFor
		\State Sample some elements of $\Psi$, in between of each sweep.
		\EndFor
		\EndProcedure
	\end{algorithmic}
\end{algorithm*}

%
%

\section{Regression with XBART}\label{sec:gaussian}

This section derives the specific forms of the split criterion and the parameter sampling distributions corresponding to a homoskedastic Gaussian additive error model:
\begin{equation}\label{eqn:gaussian_model}
\begin{split}
	Y &= f(\x) + \epsilon,\\
	&= \sum_{l = 1}^L g\left(\x ; T_l, \boldsymbol{\mu}_l\right) + \epsilon,
\end{split}
\end{equation}
where $f$ is a unknown mean function that is represented as a sum of regression trees, $f(\x) = \mathbb{E}\left[Y \mid\x\right]$, and $\epsilon \sim \N(0, \sigma^2)$. The additional non-tree parameters in this case are the residual variance and the common prior variance over the leaf means: $\Psi := (\sigma^2, \tau)$, which are given inverse-Gamma$(a_\sigma, b_\sigma)$  and inverse-Gamma$(a_\tau, b_\tau)$ priors, respectively. Parameter $\sigma^2$ is updated between tree updates while $\tau$ is updated between sweeps.

Reviewing notation, $\x_i = (x_{i1} \cdots, x_{ip})^t$ is the $i$-th observation of a $p$ dimensional covariate (row) vector and $y\in \mathbb{R}$ is the observed response variable. Capital letter $Y$ denote the response considered as a random variable, while $\y = (y_1, \cdots, y_n)^t$ denotes a length $n$ vector of corresponding realized observations, and $\X = (\x_{1}, \cdots, \x_{n})^t$ is a $n\times p$ matrix of covariate data, where rows are observations and columns are features. Leaf parameters are given independent and identical Gaussian priors, $\mu \sim \N(0, \tau)$. In the notation of section \ref{sec:gfr}, these modeling choices correspond to hyper-parameter $\Phi = \lbrace a_\sigma, b_\sigma, a_\tau, b_\tau \rbrace$ and model parameter $\Psi = \lbrace \sigma, \tau \rbrace $, respectively. 

\subsection{Sampling cutpoints}\label{sec:sample}
This section derives the explicit form of the split criterion for the above Gaussian regression model, corresponding to lines 6 through 13 in Algorithm \ref{alg:GFR}.

\begin{proposition}
For the Gaussian regression case, the split criterion for a cutpoint candidate $c_{jk}$ and the null cutpoint in equation (\ref{eqn:general_split_criterion}) and (\ref{eqn:general_split_criterion_nosplit}) has the form 
\begin{equation}\label{eqn:split_gaussian}
	\begin{aligned}
		& L(c_{jk})  \propto \exp\left[\frac{1}{2}\sum_{b=1}^2 \left(\log{ \left( \frac{\sigma^2}{\sigma^2 + \tau n_{jk}^{(b)}} \right)}  + \frac{\tau}{\sigma^2(\sigma^2 + \tau n_{jk}^{(b)})} \left(s_{jk}^{(b)}\right)^2 \right)\right],\\
		& 	L(\emptyset) \propto |\mathcal{C}| \left(\frac{(1+d)^\beta}{\alpha} - 1\right) \times \exp\left[\frac{1}{2}\left(\log{ \left( \frac{\sigma^2}{\sigma^2 + \tau n} \right)}  + \frac{\tau}{\sigma^2(\sigma^2 + \tau n)} s(\y)^2\right)\right].
	\end{aligned}
\end{equation}
Here, suppose there are $n$ data observations in the current node, and the cutpoint candidate $c_{jk}$ partitions data to left and right child nodes, with $n_{jk}^{(1)}$ and $n_{jk}^{(2)}$ observations in each, and $n_{jk}^{(1)} + n_{jk}^{(2)} = n$. Let $s_{jk}^{(1)} = \sum_{i \in \text{left child}} y_i, s_{jk}^{(2)} = \sum_{i \in \text{right child}} y_i$ denote the sufficient statistics of each child node; let $s(\y) = s_{jk}^{(1)} + s_{jk}^{(2)} = \sum_{i=1}^n y_i$ denote the sufficient statistic of the unpartitioned data. The notation $|\mathcal{C}|$ indicates the number of cutpoint candidates under consideration.
\end{proposition}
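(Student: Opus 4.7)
The plan is a direct computation: substitute the Gaussian likelihood and Gaussian prior into the definition of the marginal likelihood $m(\y ; \Phi, \Psi)$ in equation (\ref{eqn:marginallikelihood}), carry out the one-dimensional integral in $\mu$, and then plug into equations (\ref{eqn:general_split_criterion}) and (\ref{eqn:general_split_criterion_nosplit}), keeping track of which factors depend on the cutpoint and which can be absorbed into the proportionality constant.

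First I would write, for a leaf containing observations indexed by $\mathcal{I}$ with $|\mathcal{I}| = n$ and $s = \sum_{i \in \mathcal{I}} y_i$,
\begin{equation*}
m(\y ; \Phi, \Psi) = \int (2\pi\sigma^2)^{-n/2} \exp\!\left(-\tfrac{1}{2\sigma^2}\sum_{i\in\mathcal{I}}(y_i-\mu)^2\right) (2\pi\tau)^{-1/2} \exp\!\left(-\tfrac{\mu^2}{2\tau}\right) d\mu.
\end{equation*}
Expanding the quadratic in the exponent and collecting powers of $\mu$ gives a term proportional to $-\tfrac12 A \mu^2 + (s/\sigma^2)\mu$ with $A = n/\sigma^2 + 1/\tau = (\sigma^2 + n\tau)/(\sigma^2\tau)$. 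Completing the square in $\mu$ and using the standard Gaussian integral $\int e^{-A\mu^2/2 + B\mu}d\mu = \sqrt{2\pi/A}\,\exp(B^2/(2A))$ yields
\begin{equation*}
m(\y ;\Phi,\Psi) = (2\pi\sigma^2)^{-n/2}\sqrt{\tfrac{\sigma^2}{\sigma^2 + n\tau}}\exp\!\left(-\tfrac{1}{2\sigma^2}\sum_{i\in\mathcal{I}} y_i^2 + \tfrac{\tau\, s^2}{2\sigma^2(\sigma^2 + n\tau)}\right).
\end{equation*}

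Next I would apply this formula to each of the two child partitions induced by a candidate cutpoint $c_{jk}$, obtaining $m(\y_{jk}^{(b)};\Phi,\Psi)$ for $b=1,2$, and multiply to form $L(c_{jk})$. Here the key observation is that the prefactor $(2\pi\sigma^2)^{-n/2}$ and the term $-\tfrac{1}{2\sigma^2}\sum_i y_i^2$ depend only on the total sample size and on $\sum_{i=1}^n y_i^2$, which are invariant under the partition: indeed $n_{jk}^{(1)} + n_{jk}^{(2)} = n$ and $\sum_{i\in\text{left}} y_i^2 + \sum_{i\in\text{right}} y_i^2 = \sum_i y_i^2$. These same factors also appear in $m(\y;\Phi,\Psi)$ and hence in $L(\emptyset)$, so they are common to every entry of equation (\ref{eqn:sample}) and can be dropped in the proportionality statement. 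What remains of $L(c_{jk})$ is precisely the claimed product of two exponentials, one per child, of $\tfrac12\big[\log\tfrac{\sigma^2}{\sigma^2+\tau n_{jk}^{(b)}} + \tfrac{\tau (s_{jk}^{(b)})^2}{\sigma^2(\sigma^2+\tau n_{jk}^{(b)})}\big]$.

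Finally, for the null cutpoint I would substitute the same closed-form $m(\y;\Phi,\Psi)$ into equation (\ref{eqn:general_split_criterion_nosplit}), again drop the common $(2\pi\sigma^2)^{-n/2}\exp(-\tfrac{1}{2\sigma^2}\sum_i y_i^2)$ factor, and multiply by the tree-prior weight $|\mathcal{C}|\big(\tfrac{(1+d)^\beta}{\alpha}-1\big)$ to recover the second line of equation (\ref{eqn:split_gaussian}).

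Honestly, there is no real obstacle here — it is a classical normal–normal conjugate calculation. The only care needed is bookkeeping: choosing the right normalization so that the cancelling factors match exactly between $L(c_{jk})$ and $L(\emptyset)$ (since only ratios enter the sampling probabilities), and verifying that $n_{jk}^{(1)}+n_{jk}^{(2)}=n$ and $(s_{jk}^{(1)})^2+(s_{jk}^{(2)})^2 \neq s(\y)^2$ in general, so that the $s^2$ terms genuinely drive the split criterion.
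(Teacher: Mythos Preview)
Your proposal is correct and follows the same overall strategy as the paper: compute the Gaussian--Gaussian marginal likelihood in closed form, apply it to each child (and to the unsplit node), and then discard the multiplicative factors $(2\pi\sigma^2)^{-n/2}\exp\bigl(-\tfrac{1}{2\sigma^2}\sum_i y_i^2\bigr)$ that are common to every candidate because $n_{jk}^{(1)}+n_{jk}^{(2)}=n$ and $\sum_{i\in\text{left}}y_i^2+\sum_{i\in\text{right}}y_i^2=\sum_i y_i^2$.

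The only difference is in how the marginal itself is evaluated. You integrate the scalar parameter $\mu$ directly by completing the square and applying the one-dimensional Gaussian integral. The paper instead recognizes the marginal as a multivariate normal with covariance $\Omega=\tau\mathrm{J}_n\mathrm{J}_n^t+\sigma^2\mathbf{I}_n$, then uses the Sherman--Morrison formula for $\Omega^{-1}$ and Sylvester's determinant identity for $\det\Omega$. Both routes are standard and give identical expressions; your scalar approach is arguably more elementary and self-contained, while the paper's matrix formulation makes the structure $\y^t\Omega^{-1}\y$ and the role of the sufficient statistic $s(\y)=\y^t\mathrm{J}_n$ more transparent and would generalize more readily to non-scalar leaf parameters.
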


\begin{remark}
	Expressions (\ref{eqn:split_gaussian}) are denoted up to proportionality because multiplicative constants cancel when normalizing the probabilities as in equation (\ref{eqn:sample}). Also, note that because the XBART regression split criterion depends only on a univariate statistic, it can be computed rapidly if the data are pre-sorted; see Appendix \ref{sec:computation} for details regarding computational considerations.
\end{remark}

\begin{proof}
We first describe the marginal likelihood criteria for a single tree. Assuming that observations within the same leaf node are independent and identically distributed, the prior predictive distribution of equation (\ref{eqn:general_split_criterion}) is simply a mean-zero multivariate Gaussian distribution,
\begin{equation}\label{eqn:integrate_leaf_gaussian}
\begin{aligned}
	m(\y ; \tau, \sigma) &= \int \phi \left(\y ; \mu \mathrm{J}_n, \sigma^2\mathbf{I}_n\right) \phi \left(\mu; 0, \tau\right) d\mu = \phi\left(0, \Omega\right)\\
	&= (2\pi)^{-n/2}\det(\Omega)^{\minus 1/2} \exp{\left( -\frac{1}{2} \y^t \Omega\inv \y \right)},
\end{aligned}
\end{equation}
with covariance matrix and corresponding inverse
\begin{equation*}
	\Omega = \tau \mathrm{J}_n\mathrm{J}_n^t + \sigma^2 \mathbf{I}_n, \quad \Omega^{-1}  =  \sigma^{-2}\mathbf{I}_n - \frac{\tau}{\sigma^2(\sigma^2 + \tau n)} \mathrm{J}_n\mathrm{J}_n^t.
\end{equation*}
Here $\mathrm{J}_n$ is a length $n$ column vector of all ones; $\mathbf{I}_n$ is a $n$ dimensional identity matrix; $\phi\left(\y ; \mu\mathrm{J}_n, \sigma^2\mathbf{I}_n\right)$ denotes the likelihood, the density of multivariate Gaussian distribution with mean $\mu \mathrm{J}_n$ and covariance matrix $\sigma^2\mathbf{I}_n$; $\phi(\mu; 0, \tau)$ is the density of the univariate Gaussian prior over $\mu$; and $n$ is number of data observations comprising $\y$.

The prior predictive density of equation (\ref{eqn:integrate_leaf_gaussian}) may be simplified as follows. First, applying Sylvester's determinant theorem  yields
\begin{equation*}
	\det{\Omega^{-1}}= \sigma^{-2n} \left (1 - \frac{\tau n}{\sigma^2 + \tau n} \right) = \sigma^{-2n} \left (\frac{\sigma^2}{\sigma^2 + \tau n} \right).
\end{equation*}
Taking logarithms of the density in equation (\ref{eqn:integrate_leaf_gaussian}) yields
\begin{equation*}
	\begin{aligned}
		-\frac{n}{2} & \log{(2\pi)}-n\log\sigma  -\frac{1}{2} \frac{\y^t\y} {\sigma^{2}} + \frac{1}{2} \log{ \left( \frac{\sigma^2}{\sigma^2 + \tau n} \right)} + \frac{1}{2}\frac{\tau}{\sigma^2(\sigma^2 + \tau n)}s(\y)^2 ,
	\end{aligned}
\end{equation*}
with
$$s(\y) = \y^t\mathrm{J} = \sum_{i=1}^n y_i,\quad s(\y)^2 = \y'\mathrm{J}\mathrm{J}^t \y = \left(\sum_{i=1}^n y_i\right)^2.$$ Cutpoint $c_{jk}$ partitions the data into left and right child nodes according to the split rule $\{x_j \leq c_{jk}\}$ and $\{x_j > c_{jk}\}$. Denote the response (sub)vectors in left and right child nodes as $\y_{jk}^{(1)}$, $\y_{jk}^{(2)}$ of length $n_{jk}^{(1)}$ and $ n_{jk}^{(2)}$, respectively. The (log) marginal likelihood for cutpoint $c_{jk}$ is then 
\begin{equation}\label{loglikelihood}
\footnotesize
	\begin{aligned}
		&\log\left[m\left(s_{jk}^{(1)}; \tau, \sigma\right)\times m\left(s_{jk}^{(2)}; \tau, \sigma\right)\right]\\                                                                                                         
		=& \sum_{b=1}^2 -\frac{n_{jk}^{(b)}}{2}\log(2\pi) - n_{jk}^{(b)}\log\sigma- \frac{1}{2} \frac{\left(\y_{jk}^{(b)}\right)^t \y_{jk}^{(b)}}{\sigma^2}  + \frac{1}{2} \log\left(\frac{\sigma^2}{\sigma^2 + \tau n_{jk}^{(b)}}\right)  + \frac{1}{2}\frac{\tau}{\sigma^2\left(\sigma^2 + \tau n_{jk}^{(b)}\right)}\left(s_{jk}^{(b)}\right)^2\\
		= & -\frac{n}{2} \log{(2\pi)} - n \log{(\sigma)} - \frac{1}{2} \frac{\y^t\y} {\sigma^{2}} + \frac{1}{2} \sum_{b=1}^2 \left[\log{ \left( \frac{\sigma^2}{\sigma^2 + \tau n_{jk}^{(b)}} \right)}  + \frac{\tau}{\sigma^2\left(\sigma^2 + \tau n_{jk}^{(b)}\right)} \left(s_{jk}^{(b)}\right)^2 \right],
	\end{aligned}
\end{equation}
where the summation runs over the two child nodes partitioned by $c_{jk}$, and $s_{jk}^{(1)}$ and $s_{jk}^{(2)}$ denote the sufficient statistics of the respective partitions, specifically
\begin{equation*}
	s_{jk}^{(1)} = s\left(\y_{jk}^{(1)}\right), \quad s_{jk}^{(2)} = s\left(\y_{jk}^{(2)}\right), \quad s_{jk}^{(1)} + s_{jk}^{(2)} = \sum_{i=1}^n y_i = s\left(\y\right).
\end{equation*}
Disregarding the first three terms of equation (\ref{loglikelihood}), which do not vary for different cutpoints, the split criterion for cutpoint $c_{jk}$ can be expressed compactly as:
\begin{equation*}
	\begin{aligned}
		& L(c_{jk})  = m\left(s_{jk}^{(1)}; \tau, \sigma\right)\times m\left(s_{jk}^{(2)}; \tau, \sigma\right)                                                                                                               \\
		          & \propto  \exp\left[\frac{1}{2}\sum_{b=1}^2 \left(\log{ \left( \frac{\sigma^2}{\sigma^2 + \tau n_{jk}^{(b)}} \right)}  + \frac{\tau}{\sigma^2\left(\sigma^2 + \tau n_{jk}^{(b)}\right)} \left(s_{jk}^{(b)}\right)^2 \right)\right].
	\end{aligned}
\end{equation*}
For future reference, we denote the logarithm of the split criterion by $l(c_{jk}) = \log\left(L(c_{jk})\right)$. Similarly, following equation (\ref{eqn:general_split_criterion_nosplit}), the null cutpoint criterion is calculated according to
\begin{equation*}
	L(\emptyset) \propto |\mathcal{C}| \left(\frac{(1+d)^\beta}{\alpha} - 1\right) \times \exp\left[\frac{1}{2}\left(\log{ \left( \frac{\sigma^2}{\sigma^2 + \tau n} \right)}  + \frac{\tau}{\sigma^2(\sigma^2 + \tau n)} s(\y)^2\right)\right].
\end{equation*}
\end{proof}

Finally, observe that the XBART split criterion involves the (current estimate of the) residual standard error $\sigma$, thereby providing adaptive regularization. The role of $\sigma$ in the split criterion vanishes asymptotically (see Section \ref{equivalence} expression \ref{eqn:theoretical_split}), but the algorithm's finite sample performance depends on the specific value.

\subsubsection{The ensemble case}
In the case of multiple trees, XBART  ``residualizes'' the data with respect to the partial fit corresponding to the partial forest $\mathcal{F}_{\minus h}$. Formally,

\begin{equation}
\begin{split}
	m(\y ; \tau, \lbrace \sigma, \mathcal{F}_{\minus h}, \mathcal{M}_{\minus h} \rbrace) &= \int \phi \left(\y ;  f_{\minus h}(x) + \mu \mathrm{J}_n, \sigma^2\mathbf{I}_n\right) \phi \left(\mu; 0, \tau\right) d\mu \\
	& = m\left(\y - f_{\minus h}(\X) ; \tau, \sigma\right)
\end{split}
\end{equation}
where 
\begin{equation*}
f_{\minus h}(\X) = \sum_{l \neq h} g\left(\X ; T_l, \boldsymbol{\mu}_l\right)
\end{equation*}
and $g(\cdot)$ is applied row-wise to $\X$. According to this specification, the split criterion may be computed as described above, simply replacing response $\y$ by the residual $\y - f_{\minus h}(\X)$ in expressions (\ref{eqn:split_gaussian}). Such ``residualization'' is analogous to the ``Bayesian back-fitting'' procedure described in \cite{chipman2010bart}. Appendix \ref{sec:demo:XBART} visualizes a simple example of fitting and residualization of a three-tree forest.

\subsection{Parameter sampling}\label{sec:sampling}
This section introduces sampling steps for leaf-specific parameters as well as global model parameters for the Gaussian regression model, corresponding to step 15 in Algorithm \ref{alg:GFR} and lines 9 and 11 in Algorithm \ref{alg:XBART}. 
\subsubsection{Leaf parameter sampling}
In GrowFromRoot (Algorithm \ref{alg:GFR}) , after the null cutpoint is sampled, or other stopping conditions are satisfied, the current partition $\mathcal{A}_{lb}$ is designated as the $b$-th leaf of $l$-th tree and its associated leaf parameter $\mu_{lb}$ is sampled in step 15. Assuming a conjugate Gaussian prior, $\mu_{lb} \sim \N(0, \tau)$, yields the following conjugate ``full conditional'' (given the current partial fit $f_{\minus h}$):
\begin{equation*}
	\mu_{lb} \sim \N\left(\frac{ s_{lb}}{\sigma^2 \left(\frac{1}{\tau} + \frac{n_{lb}}{\sigma^2}\right) }, \frac{1}{\frac{1}{\tau} + \frac{n_{lb}}{\sigma^2}}\right),
\end{equation*}
where $n_{lb}$ is number of data observations in the node and $s_{lb} = \sum_{\x_i \in \mathcal{A}_{lb}} \left ( y_i - f_{-h}(\x_i) \right )$.

\subsubsection{Global parameter sampling}
Next, consider the global (non-tree-specific) model parameter sampling step (lines 9 and 11) of Algorithm \ref{alg:XBART}. The Gaussian regression model (\ref{eqn:gaussian_model}) incorporates two global parameters, the residual variance $\sigma^2$ and the prior variance $\tau$ over the leaf mean parameter.  Tthe residual variance $\sigma^2$ is sampled in between of each tree (line 9 of Algorithm \ref{alg:XBART}). Assuming a conjugate inverse-Gamma prior, $\sigma^2 \sim \text{inverse-Gamma}(a_{\sigma}, b_{\sigma})$,  the corresponding ``full conditional'' is
\begin{equation*}
	\sigma^2 \sim \text{inverse-Gamma}\left(n + a_{\sigma}, \mathrm{r}^t \mathrm{r} + b_{\sigma}  \right),
\end{equation*}
where $\mathrm{r} = \y - f(\X)$ is the vector of residuals based on the current fit
\begin{equation*}
	f(\X) = \sum_{l = 1}^L g\left(\X ; T_l, \boldsymbol{\mu}_l\right),
\end{equation*}
and $g(\cdot)$ is applied row-wise to $\X$.

The prior variance of the leaf means, $\tau$, is likewise given a conjugate inverse-Gamma prior, $\tau \sim \text{inverse-Gamma}(a_\tau, b_\tau)$. Based on extensive experiments, we advise sampling $\tau$ in between sweeps (line 11 of Algorithm \ref{alg:XBART}) rather between each tree sampling step. Let $\mu_{lb}$ denote the $b$-th leaf of the $l$-th tree, $\tilde{\boldsymbol{\mu}} = \{\mu_{lb}\}_{1\leq l \leq L, 1\leq b \leq B_l}$ denote the collection of \textit{all} leaf parameters of the $L$ trees in the forest, and $|\tilde{\boldsymbol{\mu}}|$ denote the total number leaf nodes in the ensemble. Then $\tau$ is updated according to 
\begin{equation*}
\tau \sim \text{inverse-Gamma}\left( |\tilde{\boldsymbol{\mu}}| + a_\tau, \sum_{\mu_{lb} \in \tilde{\boldsymbol{\mu}}} \mu_{lb}^2+ b_\tau \right).
\end{equation*}
Based on extensive simulation studies, we recommend setting $a_\tau = 3$ and $b_\tau = 0.5 \times \text{Var}(\y) / L$ as default choices.




\subsection{Prediction}
Predictions are obtained from XBART by taking posterior point-wise averages as if the sampled trees were draws using a standard Bayesian Monte Carlo algorithm. That is, given $I$ iterations of the algorithm, the final $I - I_0$ samples are used to compute a point-wise average function evaluation, where $I_0 < I$ denotes the length of the burn-in period. We recommend $I = 40$ and $I_0 = 15$ for routine use. The final estimator is therefore expressed as
\begin{equation*}
	\hat{y}_i = \bar{f}(\x_i) = \frac{1}{I-I_0}\sum_{k > I_0}^I f^{(k)}(\x_i),
\end{equation*}
where $f^{(k)}$ denotes the $k$-th iteration of Algorithm \ref{alg:XBART}. This point-wise mean estimator corresponds to the Bayes optimal estimator under mean squared estimation loss if we were to regard our samples as coming from a traditional posterior distribution. As the {\sc GrowFromRoot} sampler is not a proper full conditional, this estimator must be considered an approximation of some sort. Nonetheless, simulation results strongly suggest that the approximation is adequate, as described in the following section. Importantly, the recursive nature of XBART enables us to employ many computational strategies that cannot be applied to BART MCMC; see the Appendix \ref{sec:computation} for a description of these strategies.  Section \ref{sec:warm_start} describes how to use XBART to improve the posterior exploration of BART MCMC for fully Bayesian inference and Section \ref{sec:theory} provides some preliminary theory pertaining to the above XBART point estimator.

\section{Demonstrations}\label{sec:example}
This section documents the favorable empirical performance of XBART relative to other popular nonlinear regression methods. Results on synthetic data are presented first, followed by results on a number of publicly available real-world data sets.

Here, our comparisons focus on XGBoost  \citep{chen2016xgboost} and random forests \citep{breiman2001random}. These two methods were chosen because of their immense popularity and because in our experience they seem to work well in different regimes: XGBoost performs best in situations with low noise but complex mean functions, while random forests performs best in situations with substantial noise (sources of unmeasured variation). A key finding of our simulations is that XBART performs well across this spectrum, making it a strong default choice when the quality of one's data is unknown in advance. Additional comparisons with neural networks, using Keras \citep{chollet2015keras}, are included in Appendix \ref{sec:appendix:simulation} along with implementation details. A good faith effort was made to cross-validate XGBoost and random forests thoroughly and efficiently, but additional improvements may be possible in the hands of a skilled user. All-the-same, a key practical advantage of XBART is a relatively automated model-fitting process. Although XBART itself has numerous tuning parameters --- the number of trees, the number of sweeps, prior parameters, and the number of cutpoint candidates --- all of the simulation results reported below are based on a {\em single} set of default parameters. 

\subsection{Simulation studies}\label{regression_simulation}
The goal of these simulation studies is to examine the behavior of XBART regression across a variety of data generating processes (DGPs). Although no simulation study can be truly exhaustive, by varying several individual aspects of the data generating process, a performance profile emerges that suggests that XBART is a supervised learning algorithm of wide applicability.  Specifically, function estimation at a set of hold-out locations was judged according to root mean squared error (RMSE). 

All data generating processes are homoskedastic additive error models:
\begin{equation*}
	Y = f(\x) + \epsilon
\end{equation*}
with $\E(\epsilon) = 0$. Within this framework many individual elements are varied: the mean function $f$, the error distribution, including its shape and its variance, the predictor variable matrix $\X$, specifically the number of individual features and the dependence between them, as well as the size of the training sample.\\

\noindent{{\em Mean functions}}\\
Four mean functions are considered, as defined in Table \ref{tab:truef}. The selection of these functions was intended to cover a range of important special cases: linearity, additive models, models with interactions, nonlinear smooth functions, and functions with discontinuities. The value of using fixed functions as opposed to randomly generated polynomials, say, is that we can study coverage under repeated sampling. These functions were designed to be easily understandable to a human, but challenging to learn.\\

\begin{table}[!htbp]
\small
	\def\arraystretch{1}
	\centering 
	\begin{tabular}{ll} 
		\toprule
		Name         & Function                                                                                                \\
		\hline
		Linear       & $ \x^t \mathrm{\gamma} $;\; $ \gamma_j = -2+ \frac{4(j-1)}{d-1} $                                       \\ 

		Single index & $10\sqrt{a} + \sin{(5a)}$;\;$a=\sum_{j=1}^{10} (x_j - \gamma_j)^2$;\; $\gamma_j = -1.5+ \frac{j-1}{3}$. \\


		Trig+poly  & $5\sin(3x_1)+2x_2^2 + 3x_3x_4 $                                                                         \\

		Max          & $\max(x_1,x_2,x_3) $                                                                                    \\ 
		\bottomrule
	\end{tabular}
	\caption{Four true $f$ functions}
	\label{tab:truef}
\end{table}

\noindent{{\em Predictor variables}}\\
The $n\times p $ predictor matrix $\mathbf{X}$ is generated in one of two ways:
\begin{enumerate}
	\item Independent variables: each element of $\mathbf{X}$ is drawn independently from a standard Gaussian distribution.
	\item Correlated predictors with factor structure: each row of $\X$ is drawn from a Gaussian factor model with $k = p / 5$ factors. Latent factor scores are drawn according to $\mathbf{F}_{k\times n} \sim \N(0,1)$. The factor loading matrix, $\mathbf{B}_{p\times k}$, has entries that are either zero or one, with exactly five ones in each column and a single 1 in each row, so that $\mathbf{B}\mathbf{B}^t$ is block diagonal, with blocks of all ones and all other elements being zero. The regressors are then set as $\X = (\mathbf{B}\mathbf{F})^T + \boldsymbol{\varepsilon}$ where $\boldsymbol{\varepsilon}$ is a $n \times p$ matrix of errors with independent $\N(0, 0.01k)$ entries. Finally, each column of $\mathbf{X}$ is scaled to standard deviation 1.
\end{enumerate}

\noindent{{\em Error distribution}}\\
The error term $\epsilon$ is drawn in one of two ways:
\begin{enumerate}
	\item Gaussian. Draw $\epsilon_i \iid \N(0, \sigma^2)$ and  $\sigma^2 = \kappa^2 \mbox{Var}(f)$ where $\kappa$ controls the signal-to-noise ratio. 
	\item Student-$t$. Draw $\epsilon_i \iid \kappa\sqrt{\mbox{Var}(f)} \times \left(t_3   / \sqrt{3}\right)$, a student-$t$ distribution with degree of freedom 3. Note that the additional scaling factor $\sqrt{3}$ ensues variance of $\epsilon_i$ to be $\kappa^2\mbox{Var}(f)$.
\end{enumerate}
In each of these two cases, we consider $\kappa \in \lbrace 1, 10 \rbrace$.\\

\noindent{{\em Sample sizes and number of features ($n$ and $p$)}}\\
Sample size and the number of features were considered in a variety of combinations and are reported in Table \ref{tab:pn}. Large and small sample sizes, including the case of more predictors than observations, are considered. \\

\noindent{{\em Hyper-parameter setting}}\\
The simulation studies use the default hyper-parameter settings suggested in section \ref{sec:sampling}. 

\begin{table}[]
\begin{center}
\small
\begin{tabular}{p{26mm} | p{15mm} | p{15mm} | p{22mm}}
\centering $p = 30$   & \centering $p = 100$ & \centering $p = 500$ & $p = 1{,}000$ \\ \hline   
 &&&\\
 $n = \setstretch{1.1} \begin{cases} 10{,}000,\\  50{,}000,\\ 250{,}000 \end{cases}$  & $n = 1{,}000$ & $n = 300$ & $n = \begin{cases} 500,\\  1{,}000\end{cases}$ \\
\end{tabular}\caption{Sample sizes and number of predictors considered in the simulation study.}\label{tab:pn}
\end{center}
\end{table}


\subsubsection{Results}

Complete simulation results are reported in tabular form in Appendix \ref{sec:appendix:simulation}. Here we summarize our findings, which are visualized in Figure \ref{fig:simu2} consisting of six panels. The first column is the low-noise setting, with $\kappa = 1$; the second column is the high-noise setting, with $\kappa = 10$. The first two rows are for Gaussian errors with independent versus dependent predictors; the third row considers independent predictors with Student-t errors. Each point in the figure corresponds to a single realization (training and test set) from a particular data generating process, with sample sizes distinguished by character and true mean functions distinguished by color (see legend). In order to simultaneously compare three methods, the performance of random forests is used as a baseline, the horizontal and vertical axes represent the root mean square error of XBART and XGBoost, respectively, as a proportion of the root mean square error of random forests. Thus, the regions of the plot have the following interpretations. Points above the dashed line at 1 indicate that XGBoost performed worse than random forests; points to the right of the vertical dashed line at 1 indicate that XBART performed worse than random forests; points above the diagonal indicate that XGBoost performed worse than XBART (irrespective of the performance of random forests). Accordingly: points in the upper right quadrant (defined by the dashed lines) are cases where random forests performed best; points in the lower triangle of the unit square (with lower left corner at the origin) are cases where XGBoost performed best; points in the upper triangle of the unit square are cases where XBART performed best. For each unique color and character combination there are five independent replications.

\begin{figure}[!htbp]
	\centering
	\includegraphics[width = 0.69\textwidth]{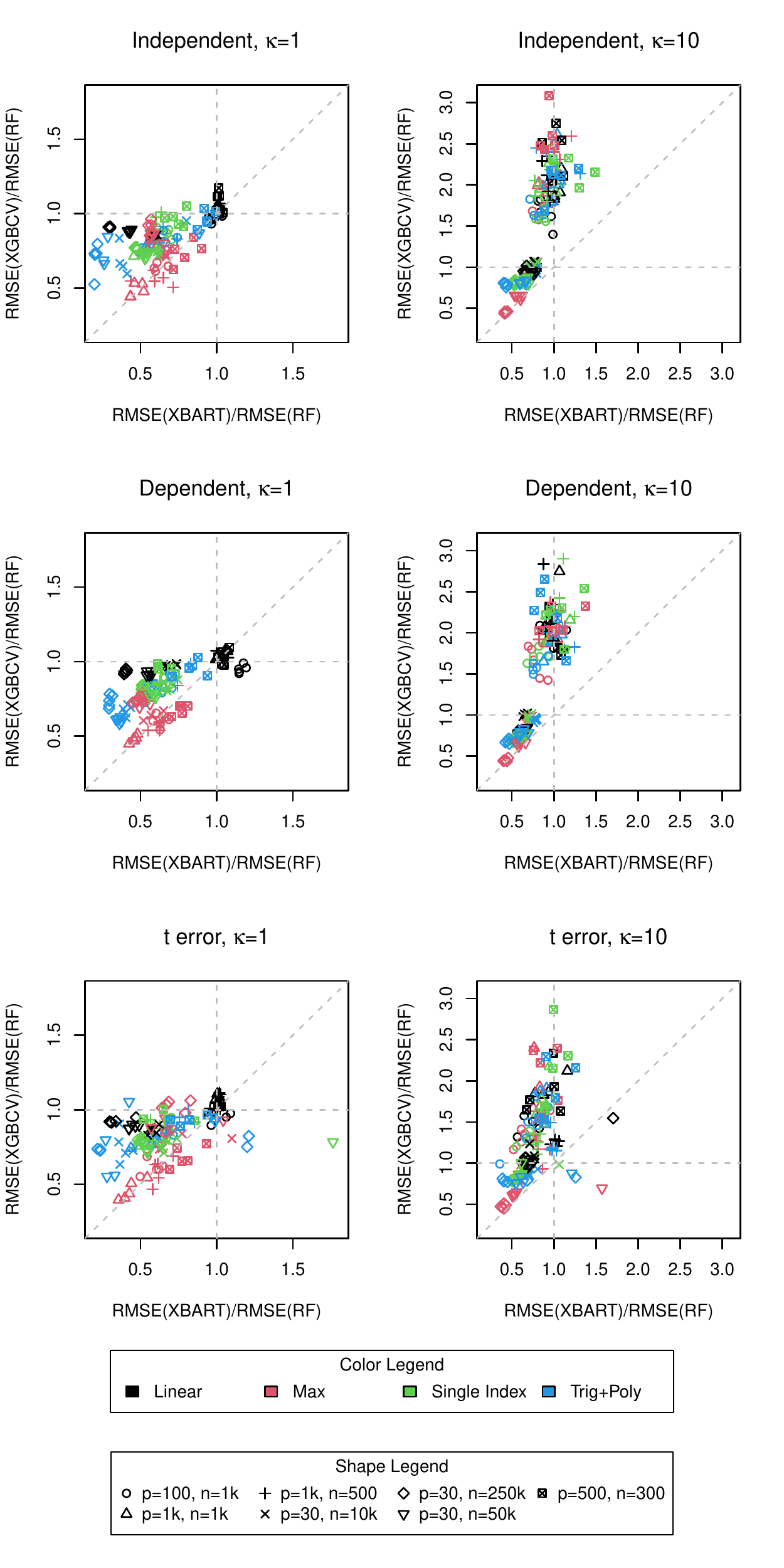}
	\caption{RMSE ratios of XBART to RF against XGBCV to RF across a variety of data generating processes. }\label{fig:simu2}
\end{figure}

Several notable patterns emerge from these plots:

\begin{enumerate}[itemsep = 0mm]
\item More points lie above the dashed horizontal line in the right column compared to the left. This means: XGBoost performs better (relatively) in the low-noise setting ($\kappa = 1$); random forests performs better (relatively) in the high-noise setting ($\kappa = 10$). 
\item More points lie above the diagonal than below it, indicating that, overall, XBART performs better than XGBoost in both the low and high noise regimes.
\item In the right column, points cluster along the vertical dashed line and the diagonal within the unit square. More specifically, in the large sample size cases (with symbols $\times$, $\triangledown$, and $\diamond$), XBART and XGBoost have similar performance, while in the more challenging low sample cases XBART and random forests have similar performance, whereas XGBoost struggles (presumably due to inadequate regularization, despite cross-validation).
\item Color-specific clusters emerge, especially in the left column. The linear function (black) clusters around the $(1,1)$ point, indicating that all three methods perform similarly on this mean function. The max function (red) clusters just below the diagonal on the unit square, meaning that XGBoost performs slightly better than XBART, while random forest struggles. 
\item In a few high sample size cases, XBART struggles with the Student-t errors as indicated by the points in the lower right.
\end{enumerate}

Not depicted in the plot are running times, which can be found in the tables in Appendix \ref{sec:appendix:simulation}. The broad trend is simply this: XBART is as fast or faster, generally speaking, than cross-validated XGBoost and typically not more than twice as slow as random forests. With broadly comparable running speed and the favorable performance profile described above, XBART appears to be a strong default choice for nonlinear regression in a wide range of settings.

\subsection{Out-of-sample prediction on empirical data}

This section compares the predictive performance of XBART and alternative methods on seven different real data sets. These data sets are accessible at the UCI machine learning data repository \citep{Dua:2019}. For each data set, 20  train-test splits are created by randomly drawing $5/6$ of the data as training sets and the remaining $1/6$ as testing sets. With $7$ data sets a total of $7\times 20 = 140$ random train-test splits were obtained.

XBART is run with default hyper-parameters as discussed in Section \ref{sec:gaussian}. The competitors are random forest and XGBoost both with and without cross-validation. The software packages and hyper-parameter settings are the same as were used in Section \ref{sec:example} and detailed in Appendix \ref{sec:appendix:simulation}.

To compare performance across different data sets, relative RMSE (RRMSE) is considered, which is the RMSE divided by the minimal RMSE for each data split:  an RRMSE of 1.0 indicates that the method achieved the minimal RMSE on a given split. Figure \ref{fig:boxplot} shows boxplots of RRMSE for each method across all data splits. Table \ref{tab:RMSE} shows the average RMSE and running time (in seconds) of each method across the 20 random splits of each data set.

\begin{table}[!htbp]
	\centering
	\resizebox{\columnwidth}{!}{%
		\begin{tabular}{lllllll}
			\toprule
			Data name (source)                               & $n$   & $p$ & XBART         & RF           & XGBCV        & XGB          \\
			\midrule
			CASP \citep{rana2015quality}                     & 45730 & 9   & 3.89 (15.5)   & 3.49 (4.0)   & 3.74 (20.5)   & 4.15 (0.4)   \\
			Energy \citep{candanedo2017data}                 & 19735 & 29  & 77.63 (14.6)  & 68.27 (1.5)  & 74.61 (18.6)  & 80.01 (0.3)  \\
			AirQuality \citep{de2008field}                   & 9357  & 13  & 45.24 (2.2)   & 45.21 (0.5)  & 45.33 (6.1)   & 45.30 (0.1)  \\
			BiasCorrection \citep{cho2020comparative}        & 7590  & 21  & 0.91 (2.3)    & 0.93 (0.5)   & 0.92 (5.1)    & 0.98 (0.1)   \\
			ElectricalStability \citep{arzamasov2018towards} & 10000 & 14  & 0.0091 (10.0) & 0.0130 (0.8) & 0.0173 (13.8) & 0.0105 (0.1) \\
			GasTurbine \citep{kaya2019predicting}            & 36733 & 9   & 0.0566 (11.3) & 0.0529 (2.8) & 0.0611 (13.6) & 0.0617 (0.2) \\
			ResidentialBuilding \citep{rafiei2016novel}      & 372   & 107 & 32.68 (0.9)   & 53.27 (0.1)  & 32.08 (6.8)   & 27.16 (0.1)  \\
			\bottomrule
		\end{tabular}
	}
	\caption{Raw RMSE and running time (in parenthesis) of all methods on different real datasets. All measurements are average of 20 independent random splits of training / testing sets.}\label{tab:RMSE}
\end{table}

\begin{figure}[h!]
	\centering
	\includegraphics[width=0.5\textwidth]{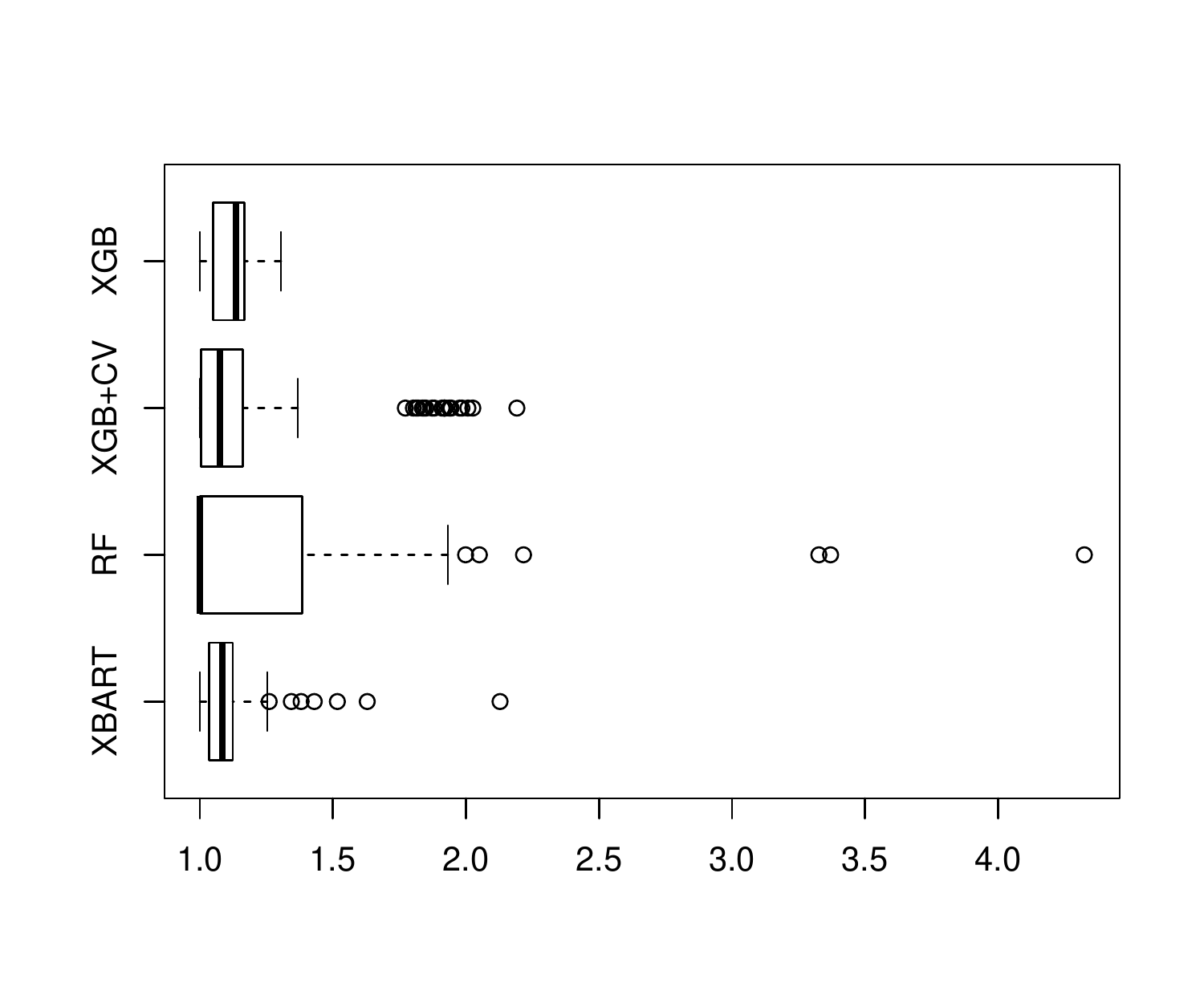}
	\caption{Boxplot of the RRMSE for each method across the total 120 training / testing splits.}\label{fig:boxplot}
\end{figure}

The upshot of Figure \ref{fig:boxplot} is that XBART tends to outperform XGBoost, with or without cross-validation. Although random forest achieves smaller average RMSE, its performance varies significantly across different data sets or data splits. Notably, cross-validated XGBoost has bigger RRMSE than XGBoost with its default settings, which suggests overfitting the training data (and, moreover, suggests that the $\kappa = 10$ simulations are more analogous to these empirical data sets). Roughly speaking, it appears that XBART performs on par with random forests on weak-signal data sets where random forests excels and performs on par with XGBoost on strong-signal data sets where XGBoost excels. This parallels the pattern observed in the simulation studies. Because in practice we often do not know which regime we are in, XBART appears to be a compelling default choice for nonlinear regression.

\section{Warm-start BART MCMC}\label{sec:warm_start}

Standard BART MCMC \citep{chipman2010bart} initializes each tree at the root (i.e., a tree only one node) and explores the posterior over trees via a random-walk Metropolis-Hastings algorithm. This approach works surprisingly well in practice, but it is natural to wonder if it takes unnecessarily long to find favorable regions in tree space. Because XBART provides a fast approximation to the BART posterior, initializing BART MCMC at XBART trees rather than null trees (with no splits) is a promising strategy to help speed convergence and accelerate posterior exploration by running multiple chains. We find that this approach yields improved point estimation and posterior credible intervals with substantially higher pointwise frequentist coverage of the mean function and a fraction of the total run time.

Consider the data generating process described in section \ref{regression_simulation}, with sample size $n =10,000$ with varying noise levels $\kappa$. We fit $40$ XBART sweeps, the first $15$ are thrown out as burn-in draws, and 25 forest draws are retained. BART was fit with a burn-in of $1,000$ samples, and $2,500$ retrained posterior samples. For the warm-start BART, 25 \textit{independent} BART MCMC chains were initialized at the $25$ forest draws obtained from XBART, and each was run for $100$ iterations without burn-in. Note that the total number of posterior draws is $2,500$, the same as the number of posterior draws by BART. We repeat drawing synthetic data, and computing intervals $100$ times. All measurements below were taken average with respect to those $100$ replications.

\begin{table}[!h]
	\centering
	\resizebox{0.8\columnwidth}{!}{%
		\begin{tabular}{l| l| lll| lllll}
			\toprule
			\multicolumn{2}{c}{} & \multicolumn{3}{c}{$\kappa = 1$} & \multicolumn{3}{c}{$\kappa = 2$}                                                  \\
			\midrule
			DGP                &     & XBART                            & BART                             & WS-BART        & XBART & BART   & WS-BART      \\
			\midrule
			\multirow{4}{*}{Linear} & coverage             & 0.78                             & 0.77                             & 0.99           & 0.50  & 0.83   & 0.98         \\
			& interval length      & 7.82                             & 6.14                             & 9.92           & 6.53  & 8.82   & 11.84        \\
			& running time         & 5.61                             & 131.17                           & 3.85  (101.86) & 3.61  & 109.43 & 3.33 (86.86) \\
			& RMSE                 & 3.11                             & 2.51                             & 1.81           & 4.74  & 4.13   & 2.53         \\
			\midrule
			\multirow{4}{*}{Max} & coverage             & 0.86                             & 0.78                             & 0.95           & 0.88  & 0.84   & 0.97         \\
			& interval length      & 0.36                             & 0.35                             & 0.46           & 0.58  & 0.64   & 0.76         \\
			& running time         & 1.57                             & 44.41                            & 1.21 (31.82)   & 1.35  & 40.23  & 1.22 (31.85) \\
			& RMSE                 & 0.11                             & 0.14                             & 0.11           & 0.17  & 0.22   & 0.17         \\
			\midrule
			\multirow{4}{*}{Single Index} & coverage             & 0.77                             & 0.73                             & 0.87           & 0.81  & 0.83   & 0.91         \\
			& interval length      & 4.84                             & 4.62                             & 5.88           & 6.81  & 7.67   & 8.49         \\
			& running time         & 5.10                             & 102.87                           & 2.97 (79.35)   & 3.92  & 90.70  & 2.81 (74.05) \\
			& RMSE                 & 1.94                             & 2.08                             & 1.92           & 2.51  & 2.73   & 2.47         \\
			\midrule
			\multirow{4}{*}{Trig+Poly}  &coverage             & 0.90                             & 0.74                             & 0.96           & 0.90  & 0.82   & 0.96         \\
			& interval length      & 3.61                             & 2.89                             & 4.23           & 5.62  & 5.06   & 6.86         \\
			& running time         & 4.68                             & 92.75                            & 3.02 (80.18)   & 3.68  & 86.81  & 2.90 (74.17) \\
			& RMSE                 & 1.03                             & 1.27                             & 1.01           & 1.65  & 1.87   & 1.60         \\
			\bottomrule
		\end{tabular}
	}
	\caption{Coverage and length of credible interval of $f$ at $95\%$ level for warm-start BART (WS-BART) MCMC. The table also shows running time (in seconds) and root mean squared error (RMSE) of all approaches. The left panel is for noise level $\kappa = 1$ and the right panel is for higher noise level $\kappa = 2$.}
	\label{warm_start_table}%
\end{table}%

Table \ref{warm_start_table} shows the credible interval coverage, length, RMSE of the point estimate, and running time of the three approaches. The running time for warm-start BART is reported as time in seconds for a single \textit{indepenent} BART MCMC, while the number in parenthesis is the running time of the entire warm-start BART fitting process, including the XBART fit and assuming all $25$ independent warm-start BART MCMC were fitted \textit{sequentially} rather than in parallel. In other words, the number in parentheses is the most conservative estimation of the total running time, because the 25 independent BART chains can be trivially parallelized to achieve much lower total running time. Indeed, with 25 processors, the run time would be the XBART run time plus the warm-start run time (not in parentheses).

Warm-start BART boasts a substantial advantage in terms of credible interval coverage and root mean squared error. In all cases, warm-start BART has the best coverage and RMSE among all three approaches and is still faster than BART under the most conservative running time estimation. When the true mean function is linear in $\x$, warm-start initialization yields considerable improvement in the estimation, which may indicate inadequate chain length of BART (that is, poor mixing).

\section{Preliminary theory}\label{sec:theory}

The XBART algorithm differs from BART or XGBoost. It is not a bagged estimator like random forests. It is not an optimization or stochastic optimization procedure, like CART or XGBoost. Neither is it a typical Markov chain Monte Carlo algorithm, like BART. Therefore, the usual frameworks for understanding supervised learning algorithms do not apply directly to XBART. On the one hand, the warm-start results in the previous section reassure us that XBART is a highly useful heuristic, even without a theoretical framework. On the other hand, an improved theoretical understanding would justify the use of XBART as a stand-alone algorithm and suggest avenues for future improvements and extensions. In this section, we prove two preliminary results about the XBART algorithm.

\begin{enumerate}
\item We establish the theory that the single-tree version of the algorithm is asymptotically consistent. Here we rely critically on recent results for CART (and random forests) and adapt the proof strategy for XBART.


\item We prove that the forest (a sum of multiple trees) version of the algorithm defines a Markov chain that has a stationary distribution.
\end{enumerate}

Taken together, these results suggest that the algorithm finds high-likelihood regions in parameter space and is not at risk of drifting aimlessly, which is consistent with the simulation evidence presented above.

\subsection{Consistency}\label{theory}
In this section, we establish the result that single-tree XBART regression is $\mathcal{L}^2$ consistent. We do not consider feature subsampling and allow each tree to grow until reaching a pre-specified maximum depth. As such, the results described in this section do not apply to the complete XBART algorithm, rather they help us to understand two key components of XBART that differentiate it from CART: stochastic sampling of the cutpoints and the use of BART's marginal likelihood split criterion.

This section adapts the recent consistency results of CART and random forests \citep{scornet2015consistency} to XBART with a single tree. More specifically, we show that the single-tree XBART satisfies the sufficient conditions stated in \cite{scornet2015consistency} for $\mathcal{L}^2$ consistency.  The proof in \cite{scornet2015consistency} proceeds by showing that the split criterion satisfies certain sufficient conditions, which are presented in the following section. To adapt this proof to XBART involves two steps. First, note that CART optimizes its split criterion to select a cutpoint, while XBART draws cutpoints at random. We reconcile this difference in the two methods ---  optimizing versus sampling of cutpoints --- by applying the perturb-max lemma, showing the XBART's cutpoint sampling strategy is equivalent to \textit{optimizing} a random objective function, where the randomness vanishes asymptotically. Second, we prove that a key lemma used by \cite{scornet2015consistency} applies to the XBART split criterion as well.

We present the main result first. Theorem \ref{maintheorem} states that a single-tree XBART fit approximates the true underlying mean function in the $\mathcal{L}^2$ norm if the maximum allowed depth goes to infinity slower than a certain function of the sample size. We state the theorem now in terms of a technical condition which will be defined shortly.
\begin{theorem}\label{maintheorem}
	Assume Condition \ref{lemma1} (cf. section \ref{sec:consis}) holds and that $||f||_\infty < \infty$ and $f$ is continuous on $[0,1]^p$. Let $\widehat{f}_n(\x)$ denote a single-tree XBART fit (without variable subsampling or a null cutpoint). Let $n\rightarrow \infty$, $d_n \rightarrow \infty$ and $(2^{d_n}-1)(\log n)^9 / n \rightarrow 0$.  Then XBART is consistent in the sense that
	\begin{equation*}
		\lim_{n\rightarrow \infty} \mathbb{E}[\widehat{f}_n(\x) - f(\x)]^2 = 0,
	\end{equation*}
	where the expectation $\mathbb{E}$ is over $\x$, which is uniformly distributed over $[0,1]^p$.
\end{theorem}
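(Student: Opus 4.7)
The plan is to verify that single-tree XBART satisfies the sufficient conditions for $\mathcal{L}^2$ consistency established by \cite{scornet2015consistency} for CART-type regression trees. Their framework reduces consistency to two ingredients: (i) the cells of the induced partition shrink in diameter as $n\to\infty$, which holds here because the tree is grown to depth $d_n\to\infty$, and (ii) the split criterion asymptotically favors variance-reducing splits, captured by a ``key lemma'' on the split criterion that I will need to establish for XBART's marginal-likelihood criterion. The bulk of the work is (ii), together with bridging a structural difference between XBART and CART.

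The first step is to reconcile the fact that CART \emph{optimizes} its split criterion while XBART \emph{samples} the cutpoint with probability proportional to $L(c_{jk})=\exp(l(c_{jk}))$. For this I would apply the \emph{perturb-max} (Gumbel-max) identity: drawing a cutpoint with categorical probabilities $\propto \exp(l(c_{jk}))$ is distributionally equivalent to
\begin{equation*}
\widehat{c} \;=\; \arg\max_{(j,k)}\ \bigl\{\,l(c_{jk}) + G_{jk}\,\bigr\},
\end{equation*}
where $\{G_{jk}\}$ are i.i.d.\ standard Gumbel perturbations. Since $\max_{jk}|G_{jk}| = O_p(\log|\mathcal{C}|)=O_p(\log n)$, while the magnitude of $l(c_{jk})$ scales with the number of observations in the current node, the Gumbel noise is negligible relative to the signal under the stated depth condition. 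Thus, with probability tending to one, the sampled cutpoint coincides with the deterministic argmax of $l(c_{jk})$, putting us in a CART-like setting.

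The second step is to show that the deterministic limit of XBART's log split criterion agrees with CART's squared-error criterion up to cutpoint-independent terms. Expanding equation (\ref{loglikelihood}) as $n_{jk}^{(b)}\to\infty$ with $\sigma^2,\tau$ bounded, the term $\log(\sigma^2/(\sigma^2+\tau n_{jk}^{(b)}))$ contributes a slowly varying penalty that depends on the split counts but not on the values of $\y$, while $\tau (s_{jk}^{(b)})^2/(\sigma^2(\sigma^2+\tau n_{jk}^{(b)}))$ is asymptotically $n_{jk}^{(b)}(\bar y_{jk}^{(b)})^2/\sigma^2$. Summed over the two children and subtracting the parent's sum of squares, this is exactly CART's variance-reduction criterion (scaled by $1/\sigma^2$). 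At this point I can invoke the key lemma of \cite{scornet2015consistency} essentially verbatim for this limiting criterion and combine it with the shrinking-cell property to obtain $\mathcal{L}^2$ consistency via their Stone-type argument, all under Condition \ref{lemma1}.

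The main obstacle is propagating the perturb-max approximation \emph{uniformly} across every internal node of the tree. Deeper nodes contain fewer observations, so the deterministic signal in $l(c_{jk})$ is weakest precisely where the Gumbel noise is most threatening, and a union bound must absorb both the $O(2^{d_n})$ internal nodes and the $O(n)$ cutpoint candidates per node. The hypothesis $(2^{d_n}-1)(\log n)^9/n\to 0$ is what makes this go through: it ensures that even at depth $d_n$, the typical node size $n/2^{d_n}$ dominates a polylogarithmic factor large enough to absorb the union bound, so that the CART-equivalence of XBART, together with the uniform validity of the limiting split criterion, holds with probability tending to one throughout the recursion. Once this uniform control is established, the consistency conclusion follows from the argument in \cite{scornet2015consistency} with only bookkeeping changes.
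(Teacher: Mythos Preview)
Your reduction strategy has a genuine gap. The claim that ``with probability tending to one, the sampled cutpoint coincides with the deterministic argmax of $l(c_{jk})$'' is false in general: when two candidate cutpoints yield nearly identical values of $l(\cdot)$ --- for instance, adjacent cutpoints on the same variable --- the gap $l(c_{jk_1}) - l(c_{jk_2})$ can be $O(1)$ rather than of order the node size, so a Gumbel perturbation of order $1$ flips the argmax with non-vanishing probability. You therefore cannot argue that the XBART tree coincides with a deterministic CART tree with high probability. The paper sidesteps this entirely by \emph{not} discarding the perturbation: it rescales by $1/n$, writes the empirical split criterion as $L_n(c_{jk}) = l(c_{jk})/n + \gamma_{jk}/n$, shows that this randomized criterion converges to a theoretical criterion $L^*(c_{jk})$ that is a fixed affine transformation of CART's, and then proves stochastic equicontinuity (Lemma~\ref{lemma2}) for the full randomized criterion itself. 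The Gumbel noise is absorbed into Scornet's random parameter $\boldsymbol{\Gamma}$, and the argument runs inside their framework for randomized trees --- no reduction to a deterministic tree is attempted.

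Two smaller points. First, your item (i), that cells shrink in diameter because $d_n \to \infty$, is not correct: a tree grown to arbitrary depth may split repeatedly on the same coordinate, leaving other edges untouched. This is exactly why Condition~\ref{lemma1} (vanishing total variation in the theoretical-tree cells) is stated as an \emph{assumption} in the theorem; it is not a consequence of depth growth. Second, the rate $(2^{d_n}-1)(\log n)^9/n \to 0$ is not there to absorb a union bound over Gumbel noise across nodes, as you suggest; it is inherited directly from \cite{scornet2015consistency} (and ultimately Theorem~10.2 of \cite{gyorfi2006distribution}) to control the estimation error through the complexity of the partition, and plays no role in handling the randomization.
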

Before proceeding, Section \ref{sec:notation} reviews necessary notations. Section \ref{equivalence} invokes the perturb-max theorem to show that the sampling cutpoint strategy of XBART is equivalent to optimizing a stochastic split criterion. Section \ref{sec:consis} presents sufficient conditions (Condition \ref{lemma1} and Lemma \ref{lemma2}) the split criterion must satisfy to guarantee tree consistency. The proof that Condition \ref{lemma1} and Lemma \ref{lemma2} imply Theorem \ref{maintheorem} may be found in \cite{scornet2015consistency}, which in turn appeals to Theorem 10.2 of \cite{gyorfi2006distribution}.

\subsubsection{Notation}\label{sec:notation}
Without loss of generality, let $\x = (\x^{(1)}, \cdots, \x^{(p)})^t$ be uniformly distributed over $[0,1]^p$ and $y \in \mathbb{R}$. The estimand is $f(\x) = \mathbb{E}\left[Y\mid \x\right]$ and the data are $\mathcal{D}_n = \{(y_1, \x_1), \cdots, (y_n, \x_n)\}$. Denote the fitted XBART single tree by $\widehat{f}_n(\x; \mathcal{D}_n): [0,1]^p \rightarrow \mathbb{R}$. Let $d_n$ denote maximum allowed depth of the tree. In the following sections, we will write $\widehat{f}_n(\x)$ rather than $\widehat{f}_n(\x ; \mathcal{D}_n)$ to lighten notation. As before, the $n\times p$ matrix $\X = (\x_{1}, \cdots, \x_{n})^t$ indicates all covariate data with $n$ observations and $p$ variables, and $\x_i = \left(\x_{i}^{(1)}, \cdots, \x_{i}^{(p)}\right)^t$ is the covariate vector of the $i$-th observation.

%
%

\subsubsection{Connection of XBART sampling to CART optimization}\label{equivalence}
Both single-tree XBART and CART grow regression trees recursively, but use different procedures to select cutpoints: CART optimizes its split criterion while XBART draws cutpoints randomly with probability proportional to its (distinct) split criterion. However, these two cutpoint selection methods are not as different as they seem, due to a well-known result called the {\em perturb-max lemma}, which shows that random sampling is equivalent to optimizing a stochastic  objective function. The perturb-max lemma shows that XBART's approach of sampling the cutpoints is equivalent to an optimization problem, making it possible to verify that the sufficient conditions of consistency in \cite{scornet2015consistency} apply to XBART. We state the perturb-max lemma for reference (following the presentation in \cite{hazan2016perturbations}, Corollary 6.2).
\begin{lemma}[Perturb-max, \cite{hazan2016perturbations}, Corollary 6.2]\label{perturb_max}
	Suppose there are $|\mathcal{C}|$ finite cutpoint candidates $\{c_{jk}\}$ at a specific node. Let $l(c_{jk}) = \log(L(c_{jk}))$ denote logarithm of the split criterion in expression (\ref{eqn:split_gaussian}). We are interested in drawing one of them according to probability $\mbox{P}(c_{jk}) = \frac{\exp\left[l({c}_{jk})\right]}{\sum_{c_{jk}\in \mathcal{C}} \exp\left[l({c}_{jk})\right]} $. We have
	\begin{equation}\label{eqn:perturb}
		\frac{\exp\left[l({c}_{jk})\right]}{\sum_{c_{jk} \in \mathcal{C}} \exp\left[l({c}_{jk})\right]} = \mbox{P}\left(c_{jk} = \arg\max_{c_{jk}\in \mathcal{C}} \{l(c_{jk})+ \gamma_{jk} \} \right)
	\end{equation}
	where $\{\gamma_{jk}\}$ are independent random draws from a Gumbel$(0,1)$ distribution with density $p(x) = \exp\left[ - x + \exp( -x )\right]$.
\end{lemma}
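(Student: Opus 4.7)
The plan is to prove this by the standard Gumbel-max argument, which is purely a distributional identity about independent Gumbel random variables. First I would introduce the perturbed scores $Z_{jk} := l(c_{jk}) + \gamma_{jk}$ and observe that, since $\gamma_{jk}$ is Gumbel$(0,1)$, $Z_{jk}$ is a shifted Gumbel with location $l(c_{jk})$: its CDF is $F_{jk}(z) = \exp(-\exp(-(z - l(c_{jk}))))$ and its density is $f_{jk}(z) = \exp(-(z - l(c_{jk})))\, F_{jk}(z)$. Continuity of the Gumbel guarantees that the argmax is almost surely unique, so ties need not be handled separately.

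Next, for a fixed candidate $(j^*, k^*)$, I would compute the probability that $(j^*,k^*)$ maximizes the perturbed scores by conditioning on the value of $Z_{j^*k^*}$ and exploiting independence of the perturbations:
$$\mbox{P}\bigl(\arg\max_{(j,k)\in\mathcal{C}} Z_{jk} = (j^*,k^*)\bigr) = \int_{-\infty}^{\infty} f_{j^*k^*}(z) \prod_{(j,k)\neq (j^*,k^*)} F_{jk}(z)\, dz.$$
The key algebraic observation is that the CDF factor $F_{j^*k^*}(z)$ sitting inside $f_{j^*k^*}(z)$ closes up the product so that the exponent becomes a sum over \emph{all} of $\mathcal{C}$. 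Setting $S := \sum_{(j,k)\in\mathcal{C}} e^{l(c_{jk})}$, the sum in the exponent simplifies to $e^{-z} S$, and the integrand collapses to $e^{l(c_{j^*k^*})} \cdot e^{-z} \cdot \exp(-e^{-z} S)$.

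The last step is the change of variables $u = e^{-z} S$, under which the integral reduces to $\int_0^\infty e^{-u}\, du = 1$ and what remains is exactly $e^{l(c_{j^*k^*})}/S$, which is the softmax probability on the left-hand side of (\ref{eqn:perturb}). I do not anticipate a real obstacle here; the only subtlety is careful bookkeeping of the exponentials to verify that the product of $|\mathcal{C}|-1$ CDF factors merges symmetrically with the density term so that every cutpoint enters the exponent on equal footing, enabling the clean $u$-substitution. Nothing in the argument depends on the particular form of the split criterion $l(c_{jk})$ — the lemma is a generic property of independent Gumbel perturbations applied to any finite collection of real-valued scores.
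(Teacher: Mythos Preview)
Your argument is correct and is the standard Gumbel-max derivation. The paper, however, does not give its own proof of this lemma: it is stated with attribution to Corollary~6.2 of \cite{hazan2016perturbations} and then used as a known result, so there is no in-paper proof to compare against --- you have simply supplied the proof that the paper omits by citation. As a minor aside, the density written in the lemma statement carries a sign typo: the standard Gumbel$(0,1)$ density is $\exp(-x-\exp(-x))$, not $\exp(-x+\exp(-x))$; your CDF and density formulas correctly use the former.
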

Operationally, Lemma \ref{perturb_max} implies that sampling cutpoints according to the probability on the left-hand side of expression \ref{eqn:perturb} can be achieved as follows:
\begin{enumerate}
	\item Calculating $l(c_{jk})$ for all cutpoint candidates $c_{jk} \in \mathcal{C}$. Draw $\gamma_{jk}$ from a Gumbel$(0,1)$ distribution. 
	\item Pick cutpoint $c_{jk}$ that maximizes the objective function $l(c_{jk}) + \gamma_{jk}$.
\end{enumerate}
Next, note that this optimization problem is invariant if the objective function is scaled by a constant $n$,  used here to denote the number of observations in the current node, so that
\begin{equation*}
	\arg\max_{c_{jk}\in \mathcal{C}} \frac{l(c_{jk})}{n} + \frac{\gamma_{jk}}{n},
\end{equation*}
and define the ``empirical'' split criterion as
\begin{equation}\label{eqn:split_criterion}
	L_n(c_{jk}) = \frac{l(c_{jk})}{n} + \frac{\gamma_{jk}}{n}.
\end{equation}
Therefore, CART and single-tree XBART differ only in terms of the objective function (split criterion) they optimize. In the following proof of the main theorem, we verify that the empirical split criterion of XBART in equation (\ref{eqn:split_criterion}) satisfies sufficient conditions for consistency.

Following the terminology of \cite{scornet2015consistency}, we refer to the limit $L^{*}(c_{jk}) = \lim_{n\rightarrow \infty} L_n(c_{jk})$ as the ``theoretical'' split criterion, and $L_n(c_{jk})$ the ``empirical'' split criterion based on data with $n$ observations. Furthermore, we refer to a tree grown according to the empirical or the theoretical split criterion as an ``empirical tree" or a ``theoretical tree", respectively.

\begin{lemma}
	Theoretical split criterion of XBART: By the strong law of large numbers,  $L_n(c_{jk}) \rightarrow L^{*}(c_{jk})$ almost surely as $n\rightarrow \infty$, for all cutpoint $c_{jk}$, where 
	\begin{equation}\label{eqn:theoretical_split}
	\small
	\begin{aligned}
		L^{*}(c_{jk}) &=\frac{1}{\sigma^2}\left[ P(\x^{(j)} \leq c_{jk}) \left( \mathbb{E}(Y \mid \x^{(j)} \leq c_{jk})\right)^2  + P(\x^{(j)} > c_{jk}) \left( \mathbb{E}(Y \mid \x^{(j)} > c_{jk}) \right)^2 \right].
	\end{aligned}
	\end{equation}
\end{lemma}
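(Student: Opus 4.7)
The plan is to substitute the explicit form of the Gaussian log split criterion $l(c_{jk})$ derived in the preceding Proposition into $L_n(c_{jk}) = l(c_{jk})/n + \gamma_{jk}/n$, rescale each summand, and apply the strong law of large numbers term-by-term. From equation (\ref{eqn:split_gaussian}), for a cutpoint $c_{jk}$ partitioning a node of $n$ observations into child counts $n_{jk}^{(1)}$ and $n_{jk}^{(2)}$ with leaf sums $s_{jk}^{(b)}$, we have
\begin{equation*}
\frac{l(c_{jk})}{n} \;=\; \frac{1}{2n}\sum_{b=1}^{2} \log\!\Bigl(\frac{\sigma^2}{\sigma^2 + \tau n_{jk}^{(b)}}\Bigr) \;+\; \frac{1}{2n}\sum_{b=1}^{2}\frac{\tau}{\sigma^2(\sigma^2 + \tau n_{jk}^{(b)})}\bigl(s_{jk}^{(b)}\bigr)^2 .
\end{equation*}
I would handle each of the two sums separately.

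For the log-determinant piece, since $n_{jk}^{(b)} \le n$, we have $|\log(\sigma^2/(\sigma^2+\tau n_{jk}^{(b)}))| = O(\log n)$, so the first sum is $O(\log n / n) \to 0$ almost surely. The Gumbel noise term $\gamma_{jk}/n$ also vanishes almost surely because $\gamma_{jk}$ is a.s.\ finite. Hence only the quadratic piece contributes to the limit.

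For the quadratic piece I would algebraically rewrite
\begin{equation*}
\frac{1}{2n}\cdot \frac{\tau \bigl(s_{jk}^{(b)}\bigr)^2}{\sigma^2(\sigma^2 + \tau n_{jk}^{(b)})} \;=\; \frac{1}{2\sigma^2}\cdot \frac{n_{jk}^{(b)}}{n}\cdot \Bigl(\frac{s_{jk}^{(b)}}{n_{jk}^{(b)}}\Bigr)^2 \cdot \frac{\tau n_{jk}^{(b)}}{\sigma^2 + \tau n_{jk}^{(b)}}
\end{equation*}
whenever $n_{jk}^{(b)} \geq 1$. The strong law of large numbers, applied to the i.i.d.\ indicator sequence $\mathbf{1}\{\x_i^{(j)}\leq c_{jk}\}$ and to the sequence $y_i\, \mathbf{1}\{\x_i^{(j)}\leq c_{jk}\}$ (and analogously for the right child), gives $n_{jk}^{(b)}/n \to P_b := P(\x^{(j)} \leq c_{jk})$ or $P(\x^{(j)}>c_{jk})$, and $s_{jk}^{(b)}/n \to P_b\, \mathbb{E}(Y\mid \text{child } b)$, almost surely. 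Provided $P_b > 0$, these combine to give $s_{jk}^{(b)}/n_{jk}^{(b)} \to \mathbb{E}(Y \mid \text{child }b)$ and $\tau n_{jk}^{(b)}/(\sigma^2 + \tau n_{jk}^{(b)}) \to 1$. Putting the pieces together and summing over $b$ yields the claimed expression for $L^{*}(c_{jk})$, up to the constant prefactor convention used in the statement. (If a child has $P_b=0$ the corresponding summand vanishes on both sides, and one handles the degenerate case by noting that $(s_{jk}^{(b)})^2/(n\,n_{jk}^{(b)}) \to 0$ a.s.\ using $|s_{jk}^{(b)}| \le \sum_{i=1}^n|y_i|\,\mathbf{1}\{\x_i^{(j)} \in \text{child }b\}$ and the SLLN.)

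The main obstacle is the degeneracy issue just flagged: when $P_b = 0$ one cannot directly take ratios, and one must argue carefully that both the numerator and the normalizing factor in $\tau n_{jk}^{(b)}/(\sigma^2+\tau n_{jk}^{(b)})$ behave correctly so that the contribution is $o(1)$. A secondary subtlety is that later arguments in Scornet et al.\ typically require this pointwise-in-$c_{jk}$ convergence to be strengthened to uniform convergence over $c_{jk}\in[0,1]$ (equivalently over a dense grid defined by $\X$); I would establish that uniformity by standard Glivenko--Cantelli-type arguments applied separately to the empirical CDF of $\x^{(j)}$ and to the empirical partial sums $\sum_i y_i \mathbf{1}\{\x_i^{(j)}\leq t\}$, which is where the detailed work lies. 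With these pieces in place, combining the vanishing log-determinant term, the vanishing Gumbel term, and the SLLN limit of the quadratic term yields $L_n(c_{jk}) \to L^{*}(c_{jk})$ almost surely for every fixed $c_{jk}$, completing the argument.
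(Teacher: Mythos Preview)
Your proposal is correct and follows essentially the same approach as the paper: both show the log-determinant and Gumbel terms are $O(\log n/n)$ and $O(1/n)$ respectively, then apply the strong law of large numbers to the quadratic piece. The only minor difference is algebraic: the paper rewrites $(s_{jk}^{(b)})^2/n_{jk}^{(b)}$ as $\sum y_i^2 - \sum (y_i-\bar{y})^2$ and invokes the identity $\mathbb{E}(Y^2\mid\cdot) - \mathbb{V}(Y\mid\cdot) = (\mathbb{E}(Y\mid\cdot))^2$, whereas you take the more direct route of applying the SLLN to the sample mean $s_{jk}^{(b)}/n_{jk}^{(b)}$ itself. Your handling of the degenerate $P_b=0$ case is actually more careful than the paper's, which does not address it explicitly. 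Your remark about needing uniform convergence in $c_{jk}$ is forward-looking but goes beyond what this particular lemma claims; the statement is pointwise in $c_{jk}$, and the uniformity is handled separately in the stochastic equicontinuity lemma.
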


See Appendix \ref{sec:convergence} for the proof. 

\begin{remark}
It is important for the proof to note that $L^{*}(c_{jk})$ does not rely on the training data. Observe also that the theoretical split criterion of XBART and CART are equivalent up to a linear transformation with constant coefficients\footnote{Specifically, the ``theoretical'' (asymptotic) CART and XBART criteria have the relationship $L^{*}_{\text{CART}}(c_{jk}) =  \left[\mathbb{E}(Y \mid \x \in \mathcal{A})\right]^2  - \sigma^2 L^{*}_{\text{XBART}}(c_{jk}),$ where $\mathcal{A}$ is the current parent node, $\sigma^2$ is the error variance parameter within the XBART model, which is assumed fixed, but need not equal the true error variance. See the supplementary material for a proof.}.
\end{remark}

\subsubsection{Sufficient conditions of consistency}\label{sec:consis}
This section establishes sufficient conditions for a recursively fit regression tree, selecting cutpoints by optimization, to be $L^2$ consistent. The proof of the single-tree XBART algorithm satisfies those conditions are presented in the Supplementary Material.

The intuition behind the proof of consistency of CART in \cite{scornet2015consistency} is to show that variation of the true function over each hyper-rectangular cell (associated to a leaf node) shrinks as the number of data observations grows larger. Specifically, the variation within a cell can become small in one of two ways. Either, one, because the diameter (largest edge) of the cell shrinks to zero (and the function is continuous and has finite infinity norm), or; two, because the true function is constant over any cell of non-shrinking diameter. This intuition can be formalized via two conditions.

Stating these conditions requires additional notation. To facilitate easy reference to \cite{scornet2015consistency}, we follow their notation in this section. Write $c = (c^{(1)}, c^{(2)})$ to represent a cutpoint, where $c^{(1)}\in \{1,\cdots, p\}$ indicates the variable being cut on and $c^{(2)} \in [0,1]$ indicates the value to cut at. A sequence of $k$ cutpoints from the root until depth $k$ is  $\cc_k = (c_1,\cdots, c_k)$, where $c_k = (c^{(1)}_k, c^{(2)}_k)$ is the $k$-th cutpoint. Let $\mathcal{A}_n(\x,\boldsymbol{\Gamma})$ denote the leaf node of an \textit{empirical tree} (grown based on the empirical split criterion) built with random parameter $\boldsymbol{\Gamma}$ that contains $\x$. Let $\mathcal{A}_k^{*}(\x,\boldsymbol{\Gamma})$ be a cell of the \textit{theoretical tree} (grown based on the theoretical split criterion) at depth $k$ containing $\x$. Additionally, $\mathcal{A}(\x, \cc_k)$ is the node containing $\x$ in the tree built with the sequence of cutpoints $\cc_k$, and we call $\mathbb{A}_k(\x)$ the set of all possible $k\geq 1$ cutpoints used to create the node containing $\x$. The distance between two cut sequences $\cc_k, \cc_k'\in \mathbb{A}_k(x)$ is defined as
\begin{equation*}
	||\cc_k - \cc_k'||_{\infty} = \sup_{1\leq j \leq k}\max \left(\left|c_j^{(1)} - c_{j}'^{(1)}\right|, \left|c_j^{(2)} - c_j'^{(2)}\right| \right).
\end{equation*}
The distance between a cut $\cc_k$ and a set $\mathbb{A} \subset \mathbb{A}_k(\x)$ is defined as
\begin{equation*}
	c_\infty(\cc_k, \mathbb{A}) = \inf_{\cc\in \mathcal{A}} ||\cc_k - \cc||_\infty.
\end{equation*}
Define the total variation of $f$ within any leaf node $\mathcal{A}$ as
\begin{equation*}
	\Delta(f, \mathcal{A}) = \sup_{\x, \x' \in \mathcal{A}} |f(\x) - f(\x')|.
\end{equation*}
We may now state the first condition invoked in Theorem \ref{maintheorem}\footnote{The Condition 1 and Lemma 1 in this paper correspond to Lemma 1 and Lemma 2 in \cite{scornet2015consistency} respectively.}. 
\begin{condition}[Vanishing total variation]\label{lemma1}
	For all $\x \in (0, 1)^p$,
	\begin{equation*}
		\Delta(f, \mathcal{A}_k^{*}(\x, \boldsymbol{\Gamma}))\rightarrow 0 \qquad \text{almost surely as } k \rightarrow \infty.
	\end{equation*}
\end{condition}

Condition \ref{lemma1} states that as $n\rightarrow\infty$, variation of the true function $f$ tends to zero in the leaf nodes of a \textit{theoretical} tree. \cite{scornet2015consistency} show that Condition 1 is satisfied by CART if $f$ is additive, the elements of $\x$ are independent, and the errors are Gaussian (their Lemma 1 and assumption 1). Because XBART and CART have the same theoretical split criterion (see the supplementary material for a proof), their proof applies directly to XBART, so Condition 1 is satisfied for us as well if $f$ satisfies those rather strict requirements. Plausibly, Condition 1 is satisfied for a broader class of functions in conjunction with the XBART (equivalently, CART) theoretical tree and for this reason we argue that Condition \ref{lemma1} is better treated as an assumption.

%
%
Now we move on to Lemma \ref{lemma2}, which states that the cutpoints of the theoretical and empirical trees are close to one other in a certain sense. For any $\x \in [0,1]^p$, and cuts $\cc_k$, define the empirical split criterion (as of equation (\ref{eqn:split_criterion})) for XBART as
\begin{equation}\label{eqn:XBART_empirical_split}
\small
\begin{split}
	L_{n,k}(\x, \cc_k)  =& \frac{\tau \mathrm{N}_n(\mathcal{A}_{L})}{\sigma^2 \left(\sigma^2 + \tau \mathrm{N}_n(\mathcal{A}_{L}) \right)}\frac{1}{n}\left(\sum_{i:\x_{i}^{(c_k^{(1)})} \leq c_k^{(2)}}y_i^2 \;\;\;-\sum_{i:\x_{i}^{(c_k^{(1)})} \leq c_k^{(2)}} (y_i - \bar{y}_l)^2 \right) \\
		        +& \frac{\tau \mathrm{N}_n(\mathcal{A}_{R})}{\sigma^2 \left(\sigma^2 + \tau \mathrm{N}_n(\mathcal{A}_{R}) \right)}\frac{1}{n}\left(\sum_{i:\x_{i}^{(c_k^{(1)})} > c_k^{(2)}}y_i^2\;\;\; -  \sum_{i:\x_{i}^{(c_k^{(1)})} > c_k^{(2)}}  (y_i - \bar{y}_r)^2 \right)  \\
		        +&  \frac{\gamma_x}{n}.
\end{split}
\end{equation}
where $\mathrm{N}_n(\mathcal{A}_{L})$ and $\mathrm{N}_n(\mathcal{A}_{R})$ denote the number of observations in the node $\mathcal{A}(\x, \cc_{k-1}) \cap \{\mathbf{z}: z^{(c_k^{(1)})} \leq c_k^{(2)}\}$
and  $\mathcal{A}(\x, \cc_{k-1}) \cap \{\mathbf{z}: z^{(c_k^{(1)})} > c_k^{(2)}\}$
respectively and $\bar{y}_l$ and $\bar{y}_r$ denote the left and right observation means. The function $L_{n,k}  (\x, \cc_k) $ is the empirical split criterion for the node $\mathcal{A}(\x, \cc_{k-1})$ expressed in terms of previous cuts $\cc_{k-1}$ and the current cut $c_k$. For all $\xi > 0$ and $\x\in [0,1]^p$, $\mathbb{A}_{k-1}^\xi(\x) \subset \mathbb{A}_{k-1}(\x)$ denotes the set of all sequences of cuts $\cc_{k-1}$ such that the node $\mathcal{A}(\x, \cc_{k-1})$ contains a hypercube with edge length $\xi$. The set $\bar{\mathbb{A}}_k^\xi (\x) = \{\cc_k : \cc_{k-1} \in \mathbb{A}_{k-1}^\xi(\x)\}$ is equipped with norm $||\cdot||_\infty$. We may now state the key lemma required in  \cite{scornet2015consistency} to prove Theorem 1.
\begin{lemma}[Stochastic equicontinuity]\label{lemma2}
	Assume that  $||f||_\infty < \infty$ and $f$ is continuous on $[0,1]^p$. Fix $\x \in [0,1]^p$, $k \in \mathbb{N}^{*}$ and let $\xi > 0$. Then $L_{n,k}(\x,\dot)$ is stochastically equicontinuous on $\bar{\mathbb{A}}_k^\xi (\x)$:  for all $\alpha$, $\rho >0$, there exist $\delta > 0$ such that
	\begin{equation*}
		\lim_{n\rightarrow \infty} \mathbb{P}\left[\sup_{\substack{||\cc_k - \cc_k'||_\infty \leq \delta \\\cc_k, \cc_k'\in \bar{\mathbb{A}}_k^\xi (\x)}} \left| L_{n,k}(\x, \cc_k) - L_{n,k}(\x, \cc_k')\right| > \alpha \right] \leq \rho.
	\end{equation*}
\end{lemma}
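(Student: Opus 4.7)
The plan is to decompose $L_{n,k}(\x,\cc_k)$ into a CART-type main term, a shrinkage-induced remainder, and the Gumbel perturbation, then verify stochastic equicontinuity for each piece. Using the algebraic identity $\sum_{i\in\mathcal{A}} y_i^2 - \sum_{i\in\mathcal{A}}(y_i-\bar{y}_{\mathcal{A}})^2 = \mathrm{N}_n(\mathcal{A})\,\bar{y}_{\mathcal{A}}^{\,2}$ together with the partial-fraction identity $\tfrac{\tau m}{\sigma^2(\sigma^2+\tau m)} = \tfrac{1}{\sigma^2} - \tfrac{1}{\sigma^2+\tau m}$, the criterion in equation~(\ref{eqn:XBART_empirical_split}) may be rewritten as
\begin{equation*}
L_{n,k}(\x,\cc_k) \;=\; \tfrac{1}{\sigma^2}\Bigl[\tfrac{\mathrm{N}_n(\mathcal{A}_{L})}{n}\bar{y}_l^{\,2} + \tfrac{\mathrm{N}_n(\mathcal{A}_{R})}{n}\bar{y}_r^{\,2}\Bigr] \;-\; R_n(\cc_k) \;+\; \tfrac{\gamma(\cc_k)}{n},
\end{equation*}
where $R_n(\cc_k) = \sum_{b\in\{L,R\}} \tfrac{\mathrm{N}_n(\mathcal{A}_{b})}{n(\sigma^2 + \tau \mathrm{N}_n(\mathcal{A}_{b}))}\bar{y}_b^{\,2}$. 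The bracketed leading term is, up to the multiplicative constant $1/\sigma^2$, the CART impurity-decrease criterion, which opens the door to importing the analysis of \cite{scornet2015consistency}.

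Second, I would uniformly dominate the perturbation and remainder on $\bar{\mathbb{A}}_k^\xi(\x)$ so that only the CART-like term requires detailed handling. The Gumbel perturbations are attached to at most $(np)^k$ distinct cut sequences, because each coordinate's cutpoint grid has size at most $n$, so the standard extreme-value tail bound for Gumbel variables gives $\sup_{\cc_k}|\gamma(\cc_k)| = O_p(\log n)$ and hence $\sup_{\cc_k}|\gamma(\cc_k)|/n = o_p(1)$. For the remainder, the elementary inequality $\mathrm{N}_n(\mathcal{A}_{b})/(\sigma^2+\tau \mathrm{N}_n(\mathcal{A}_{b})) \leq \tau^{-1}$ combined with the Gaussian-tail envelope $\max_{i \leq n} y_i^2 = O_p(\log n)$ (which follows from $\|f\|_\infty < \infty$ and the $\N(0,\sigma^2)$ error model) yields $\sup_{\cc_k}|R_n(\cc_k)| = O_p((\log n)/n) = o_p(1)$. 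Both pieces therefore vanish uniformly and contribute nothing to the equicontinuity modulus.

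Third, for the CART-like main term I would invoke Lemma~2 of \cite{scornet2015consistency}, which establishes stochastic equicontinuity of the closely related functional $\tfrac{\mathrm{N}_n(\mathcal{A}_{L})}{\mathrm{N}_n(\mathcal{A})}\bar{y}_l^{\,2} + \tfrac{\mathrm{N}_n(\mathcal{A}_{R})}{\mathrm{N}_n(\mathcal{A})}\bar{y}_r^{\,2}$ on $\bar{\mathbb{A}}_k^\xi(\x)$. Their argument combines a uniform law of large numbers over hyper-rectangles (a VC class) with continuity of $f$ and the lower bound $\mathbb{P}(\mathcal{A}) \geq \xi^p$ guaranteed by the $\xi$-hypercube constraint on the parent. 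Our normalization differs only by the factor $\mathrm{N}_n(\mathcal{A})/n$, which depends only on $\cc_{k-1}$, is eventually bounded in $[\xi^p/2,\,1]$, and varies continuously with $\cc_{k-1}$; multiplying through therefore preserves stochastic equicontinuity. A three-way union bound over a $\delta$-ball then yields the lemma: for any $\alpha,\rho>0$, pick $\delta$ so that each of the three pieces has sup-oscillation exceeding $\alpha/3$ with probability at most $\rho/3$ for all sufficiently large $n$.

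The main obstacle I anticipate is the uniform control of $R_n$. Because the $\xi$-hypercube hypothesis constrains only the parent cell, the child $\mathcal{A}_b$ may be very small, in which case $\bar{y}_b^{\,2}$ becomes erratic and cannot be bounded by a deterministic constant. Under Gaussian errors the envelope $\max_i y_i^2 = O_p(\log n)$ dominates $\bar{y}_b^{\,2}$ even when $\mathrm{N}_n(\mathcal{A}_b) = O(1)$, so the bound closes; a heavier-tailed noise model would require a more delicate truncation argument. A secondary subtlety is that $\gamma(\cc_k)$ jumps as $\cc_k$ crosses grid points, but the discreteness of the effective cutpoint grid (polynomially many values in $n$) tames this via the $\log n$ extreme-value bound.
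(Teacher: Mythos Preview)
Your decomposition is correct and your overall plan is sound. The partial-fraction identity cleanly isolates a CART-type main term, a shrinkage remainder bounded by $2(\tau n)^{-1}\max_i y_i^2$, and the Gumbel term; the latter two are indeed $o_p(1)$ uniformly over $\bar{\mathbb{A}}_k^\xi(\x)$ under the Gaussian noise model, so they contribute nothing to the modulus of continuity. This matches the paper's stated strategy---follow \cite{scornet2015consistency} and supply ``new bounding arguments specific to the XBART criterion''---and your Gumbel and shrinkage bounds are exactly such arguments.

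Where your route differs is in the treatment of the main term. The paper (per its description, the full argument being relegated to supplementary material) re-runs the empirical-process argument of \cite{scornet2015consistency} directly on the XBART criterion, carrying the shrinkage coefficients through the bounds. You instead reduce to Scornet et al.'s Lemma~2 as a black box and handle the discrepancy---the factor $\mathrm{N}_n(\mathcal{A})/n$ and the additive $\bar{y}^{\,2}$ that distinguish your bracketed functional from their impurity-decrease criterion---by noting that both depend only on $\cc_{k-1}$ and are themselves stochastically equicontinuous on $\mathbb{A}_{k-1}^\xi(\x)$ (the $\xi$-hypercube constraint gives the requisite lower bound $\mathrm{N}_n(\mathcal{A})/n \geq \xi^p/2$ eventually). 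This is more modular and arguably more transparent: it makes explicit that the XBART and CART criteria differ only by asymptotically negligible terms. The paper's line-by-line verification buys a self-contained proof that does not cite Scornet et al.'s lemma statement verbatim, at the cost of repeating their VC/ULLN machinery. Both routes rest on the same underlying empirical-process facts, so neither is materially stronger.

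One small point to tighten: Scornet et al.'s Lemma~2 is stated for the impurity decrease, not for $\tfrac{\mathrm{N}_n(\mathcal{A}_L)}{\mathrm{N}_n(\mathcal{A})}\bar{y}_l^{\,2}+\tfrac{\mathrm{N}_n(\mathcal{A}_R)}{\mathrm{N}_n(\mathcal{A})}\bar{y}_r^{\,2}$ directly. You will therefore need either to observe that their proof method handles each summand separately (it does), or to explicitly check that $\bar{y}^{\,2}$ as a function of $\cc_{k-1}$ is stochastically equicontinuous on $\mathbb{A}_{k-1}^\xi(\x)$ so you can add it back. This is routine under the $\xi$-constraint but should be stated.
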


\noindent A proof that Lemma \ref{lemma2} holds for XBART split criterion in equation (\ref{eqn:XBART_empirical_split}) is presented in the Supplementary Material. Our proof strategy is the same as \cite{scornet2015consistency}, but had to be verified using new bounding arguments specific to the XBART criterion.

\subsection{Stationarity of the forest algorithm}\label{stationarity}
This section proves that a slightly modified version of {\sc GrowFromRoot} generates draws from a Markov chain with a stationary distribution.  The slight modification is that all leaf parameters are drawn jointly, conditional on the current state of the forest, prior to sampling (growing) each new tree.

\begin{theorem}\label{theorem:stationary}
	{\sc XBART} samples $\mathcal{F} = \{T_h\}_{1\leq h \leq L}$ according to a finite-state Markov chain with stationary distribution.
\end{theorem}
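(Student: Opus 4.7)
The plan is to verify, in this order: (i) the relevant state space is finite, (ii) the induced transitions are time-homogeneous Markov, and (iii) existence of a stationary distribution then follows from standard finite-state theory.

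First, I would carefully define the state space $\mathcal{S}$ as the set of $L$-tuples of \emph{tree structures}, i.e., the split rules only, without leaf values. Every split rule is an element of the finite cutpoint candidate set $\mathcal{C}$ (fixed by the data matrix $\mathbf{X}$), and tree growth in {\sc GrowFromRoot} can only terminate on the null cutpoint or on a hard stopping condition (maximum depth, minimum leaf size, etc.). Since $\mathcal{C}$ is finite and the number of nodes in each tree is bounded by such stopping rules, there are only finitely many possible trees, hence $|\mathcal{S}| < \infty$.

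Next, I would argue the Markov property using the stated modification. At the start of each tree-$h$ update, all leaf parameters of the leave-one-out forest (and the continuous $\Psi = (\sigma^2,\tau)$, treated the same way) are drawn from their full conditional distribution given the current forest structure $\mathcal{F}^{\text{cur}} \in \mathcal{S}$ alone. {\sc GrowFromRoot} then produces $T_h^{\text{new}}$ from that conditional via the criterion in equation~(\ref{eqn:general_split_criterion}). Therefore the distribution of $T_h^{\text{new}}$ depends on past iterations only through $\mathcal{F}^{\text{cur}}$, and composing the $L$ sequential per-tree updates within a sweep yields a transition kernel $P : \mathcal{S} \times \mathcal{S} \to [0,1]$. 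The construction is independent of the iteration index, so the chain is time-homogeneous. Since $\mathcal{S}$ is finite, $P$ corresponds to a row-stochastic matrix, and a standard application of the Brouwer fixed point theorem (equivalently, Perron--Frobenius) to the map $\pi \mapsto \pi P$ on the probability simplex over $\mathcal{S}$ produces a probability vector $\pi^{*}$ with $\pi^{*} P = \pi^{*}$, which is the desired stationary distribution.

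The main obstacle is the bookkeeping at step (ii): one must be explicit that the modification is precisely what permits the chain on \emph{forest structures alone} to be Markov. Without resampling leaf parameters and $\Psi$ from their conditional given $\mathcal{F}^{\text{cur}}$, their stochastic values carry information from earlier iterations, and the natural state would include these continuous quantities, breaking the ``finite state'' conclusion. I would therefore spell out, as a short lemma, that marginalizing the continuous parameters against their $\mathcal{F}^{\text{cur}}$-conditional distributions yields a well-defined, measurable, finite-state transition kernel on $\mathcal{S}$; everything else then follows from classical Markov chain theory.
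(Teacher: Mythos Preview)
Your proposal is correct for the theorem as stated, but the paper's argument takes a different and somewhat stronger route. Both of you establish finiteness of the state space and the Markov property in essentially the same way (indeed, you are more explicit than the paper about why the ``modification''---resampling all leaf parameters and $\Psi$ from their full conditionals given the current forest---is exactly what is needed to make the induced chain on forest \emph{structures} Markov). The divergence is at the last step: you invoke Brouwer/Perron--Frobenius on the row-stochastic transition matrix to obtain a fixed point $\pi^{*}$, which yields existence only. The paper instead verifies that the transition probability between \emph{any} two forests is strictly positive, by noting that the {\sc GrowFromRoot} probability of any particular tree is a finite product of strictly positive cutpoint probabilities, and that any forest can be reached from any other in $L$ steps by regrowing the trees one at a time. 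This gives irreducibility (and aperiodicity), so standard finite-state theory yields a \emph{unique} stationary distribution and convergence to it. Your argument buys simplicity; the paper's buys uniqueness and ergodicity, which is what one actually wants in order to justify averaging over sweeps.
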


\begin{proof}
The proof proceeds by showing that the forest sampling process 1) is a Markov chain, 2) has only finite states, and 3) that the transition probability between any two states is positive. Therefore, by standard results (for example Theorem 1.7 and 1.20 of \cite{durrett2016essentials}), it has a stationary distribution.

	Let $\mathcal{F}= \{T_h\}_{1\leq h \leq L}$ denote the forest of $L$ trees and let $\boldsymbol{\mu} = \{\boldsymbol{\mu}_h\}_{1\leq h \leq L}$ denote the associated leaf parameters. Let $\mathcal{F}_{-j} = \{T_{h}\}_{1\leq h \leq L} / T_j$ and $\boldsymbol{\mu}_{-j} = \{\boldsymbol{\mu}_h\}_{1 \leq h \leq L}/ \boldsymbol{\mu}_j$ be sets of trees and leaf parameters excepting the $j$th one. 
	
\begin{enumerate}	

\item {\sc GrowFromRoot} explicitly updates $T_j$ given $\mathcal{F}_{-j}$ and therefore defines a Markov chain.	More explicitly, $\mathcal{F}^{(k)}$ is drawn by sampling and replacing tree $T_j$, conditional on $\mathcal{F}_{-h}^{(k-1)}$.

\item {\sc GrowFromRoot} samples trees from a finite state space. Each tree has a maximum depth and all cutpoint candidates are defined in terms of a finite predictor matrix $\X$, so the total number of tree configurations is finite. The forest $\mathcal{F}$ is an ensemble of a finite number of trees, thus has a finite number of states as well.

\item The probability that {\sc GrowFromRoot} draws a given tree is a product of the non-zero probabilities of drawing specific cutpoints at each node; therefore, the probability of drawing any specific tree is non-zero. 
	
	Specifically, 
	\begin{equation}\label{eqn:tree_transit}
	\small
		p\left(T_j^{(k)} \bigm | \mathcal{F}^{(k-1)}\right ) =\int p\left(T^{(k)}_j \bigm | \y, \mathcal{F}^{(k-1)}_{-j}, \boldsymbol{\mu}_{-j} \right) \pi\left(\boldsymbol{\mu}_{-j}, \sigma^2 \bigm | \y, \mathcal{F}^{(k-1)}\right)  d(\boldsymbol{\mu}_{-j}, \sigma^2) > 0,
	\end{equation}
for any $T_j$. The second factor of the integrand, $ \pi\left(\boldsymbol{\mu}_{-j}, \sigma^2 \bigm | \y, \mathcal{F}^{(k-1)}\right)$, denotes a draw from a conjugate linear regression with design matrix given by dummy variables indicating leaf membership (and simply discarding the parameters associated with the $j$th tree). The first factor in the integrand, $p\left(T^{(k)}_j \bigm | \y, \mathcal{F}^{(k-1)}_{-j}, \boldsymbol{\mu}_{-j} \right)$ denotes the product of {\sc GrowFromRoot} probabilities leading to a draw of $T_j$. Both factors are always greater than zero.

Finally, observe that there is at least one way to transition from any forest $\mathcal{F}$ to any other forest $\mathcal{F}'$, which is to regrow each tree and replace them one by one over exactly $L$ iterations. 	
\end{enumerate}	
\end{proof}


\section{Discussion}\label{discussion}

To conclude, we briefly describe the chronological development of the XBART algorithm, to provide additional context for evaluating the relative merits of XBART, BART (fit with traditional MCMC), and XGBoost.

The XBART algorithm grew out of our attempts to better understand BART's exceptional empirical performance. In routine use, we found that BART often outperformed XGBoost at function estimation and prediction, sometimes substantially so. Unfortunately, there were some data sets that were simply too large for us to apply BART to, while XGBoost is notoriously fast. Our initial hypothesis was that BART's clever regularization might be behind its remarkable performance, so we set about to create a fast recursive tree-fitting algorithm that utilized a penalty analogous to the BART prior. We found that this approach did not work as well as BART. Next, we conjectured that the BART splitting criterion might itself be the source of BART's advantage and we implemented a version that greedily optimized BART's marginal likelihood criteria when growing the trees. This too, did not match BART's performance, so we decided to try sampling the cutpoints as currently performed in XBART, while optimizing the leaf parameters (rather than sampling). This led to a notable improvement, but still typically under-performed BART in our comparisons. So, we implemented the sampling of the leaf parameters, leading to the XBART algorithm described in this paper. In addition, we notice that sampling the prior variance $\tau$ improves the performance on high noise and few data observations cases significantly. Finally, at this point, our new algorithm mimicked BART's performance and sometimes even outperformed it --- especially in larger problems where BART could not be run long enough to achieve adequate mixing.

To put this journey into perspective, our initial goal was to create a CART-like recursive optimization algorithm and introduce certain elements of BART. But, one by one as we incorporated these elements, each one yielded additional performance gains. In the end, we ended up with a BART fitting algorithm with a novel recursive growing scheme --- mostly BART with a dash of CART, rather than vice-versa as we had initially planned.

The end result of these experiments and the accompanying algorithm development was the completely novel function estimation method described in Section \ref{sec:gaussian}. The remainder of the paper represents our attempts to better understand the operating characteristics of the new algorithm. Our key findings were that it has excellent performance on simulated and empirical data across various signal-to-noise regimes and test functions (Section \ref{sec:example}). We also confirmed that it provides superior estimation and inference when used to initialize the BART MCMC algorithm, compared to the standard initialization scheme (Section \ref{sec:warm_start}). Finally, we were able to establish several basic theoretical facts about the algorithm (Section \ref{sec:theory}). Though these theoretical results are limited, they also address many of the prima facie objections one might have regarding a pseudo-Bayesian sampling algorithm, namely consistency and stationarity. Extending the single-tree results presented here to the ensemble version of the model is the subject of ongoing research.

BART has proven to be a widely used model in many fields, combining state-of-the-art estimation and prediction with fully Bayesian uncertainty quantification. The XBART algorithm permits these virtues to be realized on large data sets that were previously out of reach for existing implementations. The generality of the {\sc GrowFromRoot} algorithm suggests that the recursive stochastic search strategy at the heart of XBART can be readily adapted to other models and could yield similar accuracy and computational efficiency improvements as those seen in regression problems.

The software package \texttt{XBART} is available online at \url{http://www.github.com/jingyuhe/xbart} for both \texttt{R} and \texttt{python}. The package is still undergoing active development for further extensions and performance optimizations.
\bibliographystyle{chicago}
\bibliography{BART}

\begin{thebibliography}{}

\bibitem[\protect\citeauthoryear{Arzamasov, B{\"o}hm, and Jochem}{Arzamasov
  et~al.}{2018}]{arzamasov2018towards}
Arzamasov, V., K.~B{\"o}hm, and P.~Jochem (2018).
\newblock Towards concise models of grid stability.
\newblock In {\em 2018 IEEE International Conference on Communications,
  Control, and Computing Technologies for Smart Grids (SmartGridComm)}, pp.\
  1--6. IEEE.

\bibitem[\protect\citeauthoryear{Bekkerma}{Bekkerma}{2015}]{bekkerman2015}
Bekkerma, R. (2015).
\newblock The present and the future of the kdd cup competition: an
  outsider’s perspective.

\bibitem[\protect\citeauthoryear{Breiman}{Breiman}{2001}]{breiman2001random}
Breiman, L. (2001).
\newblock Random forests.
\newblock {\em Machine learning\/}~{\em 45\/}(1), 5--32.

\bibitem[\protect\citeauthoryear{Breiman, Friedman, Olshen, and Stone}{Breiman
  et~al.}{1984}]{breiman1984classification}
Breiman, L., J.~Friedman, R.~Olshen, and C.~J. Stone (1984).
\newblock {\em Classification and regression trees}.
\newblock Chapman and Hall/CRC.

\bibitem[\protect\citeauthoryear{Brownlee}{Brownlee}{2016}]{brownlee2016}
Brownlee, J. (2016).
\newblock A gentle introduction to xgboost for applied machine learning.

\bibitem[\protect\citeauthoryear{Candanedo, Feldheim, and Deramaix}{Candanedo
  et~al.}{2017}]{candanedo2017data}
Candanedo, L.~M., V.~Feldheim, and D.~Deramaix (2017).
\newblock Data driven prediction models of energy use of appliances in a
  low-energy house.
\newblock {\em Energy and buildings\/}~{\em 140}, 81--97.

\bibitem[\protect\citeauthoryear{Chen and Guestrin}{Chen and
  Guestrin}{2016}]{chen2016xgboost}
Chen, T. and C.~Guestrin (2016).
\newblock {XGB}oost: {A} scalable tree boosting system.
\newblock In {\em Proceedings of the 22nd ACM SIGKDD International Conference
  on Knowledge Discovery and Data Mining}, pp.\  785--794. ACM.

\bibitem[\protect\citeauthoryear{Chipman, George, McCulloch, et~al.}{Chipman
  et~al.}{2010}]{chipman2010bart}
Chipman, H.~A., E.~I. George, R.~E. McCulloch, et~al. (2010).
\newblock {BART}: {B}ayesian additive regression trees.
\newblock {\em The Annals of Applied Statistics\/}~{\em 4\/}(1), 266--298.

\bibitem[\protect\citeauthoryear{Cho, Yoo, Im, and Cha}{Cho
  et~al.}{2020}]{cho2020comparative}
Cho, D., C.~Yoo, J.~Im, and D.-H. Cha (2020).
\newblock Comparative assessment of various machine learning-based bias
  correction methods for numerical weather prediction model forecasts of
  extreme air temperatures in urban areas.
\newblock {\em Earth and Space Science\/}~{\em 7\/}(4), e2019EA000740.

\bibitem[\protect\citeauthoryear{Chollet et~al.}{Chollet
  et~al.}{2015}]{chollet2015keras}
Chollet, F. et~al. (2015).
\newblock Keras.

\bibitem[\protect\citeauthoryear{De~Vito, Massera, Piga, Martinotto, and
  Di~Francia}{De~Vito et~al.}{2008}]{de2008field}
De~Vito, S., E.~Massera, M.~Piga, L.~Martinotto, and G.~Di~Francia (2008).
\newblock On field calibration of an electronic nose for benzene estimation in
  an urban pollution monitoring scenario.
\newblock {\em Sensors and Actuators B: Chemical\/}~{\em 129\/}(2), 750--757.

\bibitem[\protect\citeauthoryear{Dua and Graff}{Dua and Graff}{2017}]{Dua:2019}
Dua, D. and C.~Graff (2017).
\newblock {UCI} machine learning repository.

\bibitem[\protect\citeauthoryear{Durrett}{Durrett}{2016}]{durrett2016essentials}
Durrett, R. (2016).
\newblock {\em Essentials of stochastic processes}, Volume~1.
\newblock Springer.

\bibitem[\protect\citeauthoryear{Friedman}{Friedman}{2001}]{friedman2001greedy}
Friedman, J.~H. (2001).
\newblock Greedy function approximation: a gradient boosting machine.
\newblock {\em Annals of Statistics\/}, 1189--1232.

\bibitem[\protect\citeauthoryear{Friedman}{Friedman}{2002}]{friedman2002stochastic}
Friedman, J.~H. (2002).
\newblock Stochastic gradient boosting.
\newblock {\em Computational Statistics \& Data Analysis\/}~{\em 38\/}(4),
  367--378.

\bibitem[\protect\citeauthoryear{Gy{\"o}rfi, Kohler, Krzyzak, and
  Walk}{Gy{\"o}rfi et~al.}{2006}]{gyorfi2006distribution}
Gy{\"o}rfi, L., M.~Kohler, A.~Krzyzak, and H.~Walk (2006).
\newblock {\em A distribution-free theory of nonparametric regression}.
\newblock Springer Science \& Business Media.

\bibitem[\protect\citeauthoryear{Hastie, Tibshirani, Friedman, and
  Franklin}{Hastie et~al.}{2005}]{hastie2005elements}
Hastie, T., R.~Tibshirani, J.~Friedman, and J.~Franklin (2005).
\newblock The elements of statistical learning: data mining, inference and
  prediction.
\newblock {\em The Mathematical Intelligencer\/}~{\em 27\/}(2), 83--85.

\bibitem[\protect\citeauthoryear{Hazan, Papandreou, and Tarlow}{Hazan
  et~al.}{2016}]{hazan2016perturbations}
Hazan, T., G.~Papandreou, and D.~Tarlow (2016).
\newblock {\em Perturbations, {O}ptimization, and {S}tatistics}.
\newblock MIT Press.

\bibitem[\protect\citeauthoryear{He, Yalov, and Hahn}{He
  et~al.}{2019}]{he2019xbart}
He, J., S.~Yalov, and P.~R. Hahn (2019).
\newblock {XBART}: {A}ccelerated {B}ayesian additive regression trees.
\newblock In {\em The 22nd International Conference on Artificial Intelligence
  and Statistics}, pp.\  1130--1138.

\bibitem[\protect\citeauthoryear{Hill, Linero, and Murray}{Hill
  et~al.}{2020}]{hill2020bayesian}
Hill, J., A.~Linero, and J.~Murray (2020).
\newblock Bayesian additive regression trees: A review and look forward.
\newblock {\em Annual Review of Statistics and Its Application\/}~{\em 7}.

\bibitem[\protect\citeauthoryear{Kaya, T{\"U}FEKC{\.I}, and Uzun}{Kaya
  et~al.}{2019}]{kaya2019predicting}
Kaya, H., P.~T{\"U}FEKC{\.I}, and E.~Uzun (2019).
\newblock Predicting co and no x emissions from gas turbines: novel data and a
  benchmark pems.
\newblock {\em Turkish Journal of Electrical Engineering \& Computer
  Sciences\/}~{\em 27\/}(6), 4783--4796.

\bibitem[\protect\citeauthoryear{Ke, Meng, Finley, Wang, Chen, Ma, Ye, and
  Liu}{Ke et~al.}{2017}]{ke2017lightgbm}
Ke, G., Q.~Meng, T.~Finley, T.~Wang, W.~Chen, W.~Ma, Q.~Ye, and T.-Y. Liu
  (2017).
\newblock {LightGBM}: {A} highly efficient gradient boosting decision tree.
\newblock In {\em Advances in Neural Information Processing Systems}, pp.\
  3146--3154.

\bibitem[\protect\citeauthoryear{Lei, G’Sell, Rinaldo, Tibshirani, and
  Wasserman}{Lei et~al.}{2018}]{lei2018distribution}
Lei, J., M.~G’Sell, A.~Rinaldo, R.~J. Tibshirani, and L.~Wasserman (2018).
\newblock Distribution-free predictive inference for regression.
\newblock {\em Journal of the American Statistical Association\/}~{\em
  113\/}(523), 1094--1111.

\bibitem[\protect\citeauthoryear{Linero}{Linero}{2017}]{linero2017review}
Linero, A.~R. (2017).
\newblock A review of tree-based {B}ayesian methods.
\newblock {\em Communications for Statistical Applications and Methods\/}~{\em
  24\/}(6).

\bibitem[\protect\citeauthoryear{Linero}{Linero}{2018}]{linero2018bayesian2}
Linero, A.~R. (2018).
\newblock Bayesian regression trees for high-dimensional prediction and
  variable selection.
\newblock {\em Journal of the American Statistical Association\/}~{\em
  113\/}(522), 626--636.

\bibitem[\protect\citeauthoryear{Mehta, Agrawal, and Rissanen}{Mehta
  et~al.}{1996}]{mehta1996sliq}
Mehta, M., R.~Agrawal, and J.~Rissanen (1996).
\newblock {SLIQ}: {A} fast scalable classifier for data mining.
\newblock In {\em International Conference on Extending Database Technology},
  pp.\  18--32. Springer.

\bibitem[\protect\citeauthoryear{Pafka}{Pafka}{2015}]{pafka2015}
Pafka, S. (2015).
\newblock Benchmarking random forest implementations.

\bibitem[\protect\citeauthoryear{Rafiei and Adeli}{Rafiei and
  Adeli}{2016}]{rafiei2016novel}
Rafiei, M.~H. and H.~Adeli (2016).
\newblock A novel machine learning model for estimation of sale prices of real
  estate units.
\newblock {\em Journal of Construction Engineering and Management\/}~{\em
  142\/}(2), 04015066.

\bibitem[\protect\citeauthoryear{Rana, Sharma, Bhattacharya, and Shukla}{Rana
  et~al.}{2015}]{rana2015quality}
Rana, P.~S., H.~Sharma, M.~Bhattacharya, and A.~Shukla (2015).
\newblock Quality assessment of modeled protein structure using physicochemical
  properties.
\newblock {\em Journal of bioinformatics and computational biology\/}~{\em
  13\/}(02), 1550005.

\bibitem[\protect\citeauthoryear{Scornet, Biau, Vert, et~al.}{Scornet
  et~al.}{2015}]{scornet2015consistency}
Scornet, E., G.~Biau, J.-P. Vert, et~al. (2015).
\newblock Consistency of random forests.
\newblock {\em The Annals of Statistics\/}~{\em 43\/}(4), 1716--1741.

\bibitem[\protect\citeauthoryear{Wright and Ziegler}{Wright and
  Ziegler}{2015}]{wright2015ranger}
Wright, M.~N. and A.~Ziegler (2015).
\newblock ranger: {A} fast implementation of random forests for high
  dimensional data in {C}++ and {R}.
\newblock {\em arXiv preprint arXiv:1508.04409\/}.

\end{thebibliography}

\newpage
\appendix

\section{Computational considerations}\label{sec:computation}
This section catalogs implementation details that improve the computational efficiency of the algorithm. These implementational details serve to make the algorithm competitive with state-of-the-art supervised learning algorithms, such as XGBoost. These particular strategies, such as variable presorting and careful handling of categorical covariates, are inapplicable in the standard BART MCMC and XBART's ability to incorporate them is the basis of its improved performance.

\subsection{Adaptive variable importance weights}
Our XBART implementation strikes an intermediate balance between the local BART updates, which randomly consider one variable at a time, and the all-variables Bayes rule described above. Specifically, we consider only $m \leq V$ variables at a time when sampling each cutpoint. Rather than drawing these variables uniformly at random as is done in random forests, we introduce a parameter vector $\w$, which denotes the prior probability that a given variable is chosen to be split on, as suggested in \cite{linero2018bayesian2}. Before sampling each cutpoint, we randomly select $m$ variables (without replacement) with probability proportional to $\w$.

\subsection{Pre-sorting predictor variables}\label{pre_sort}
Observe that the XBART split criterion depends on sufficient statistics only, namely the sum of the observations in a node (that is, at a given level of the recursion). An important implication of this, for computation, is that with sorted predictor variables, the various cutpoint integrated likelihoods can be computed rapidly via a single sweep through the data (per variable), taking cumulative sums. Let $\mathbf{O}$ denote the $V$-by-$n$ array such that $o_{vh}$ denotes the index, in the data, of the observation with the $h$-th smallest value of the $v$-th predictor variable $x_v$. Then, taking the cumulative sums gives
\begin{equation*}
	s(\leq,v,c) = \sum_{h \leq c} \res_{o_{vh}}
\end{equation*}
and
\begin{equation*}
	s(>,v,c) = \sum_{h = 1}^n r_{lh}-  s(\leq,v,c).
\end{equation*}
The subscript $l$ on the residual indicates that these evaluations pertain to the update of the $l$-th tree.

The above formulation is useful if the data can be presorted and, furthermore, the sorting can be maintained at all levels of the recursive tree-growing process. To achieve this, we must loop over (sift) each of the variables before passing to the next level of the recursion. Specifically, we form two new index matrices $\mathbf{O}^{\leq}$ and $\mathbf{O}^{>}$ that partition the data according to the selected cutpoint. For the selected split variable $v$ and selected split $c$, this is automatic: $O_v^{\leq} = O_{v,1:c}$ and $O_v^{>} = O_{v,(c+1):n}$. For the other $V-1$ variables, we sift them by looping through all $n$ available observations, populating $O^{
			\leq}_{q}$ and $O^{>}_{q}$, for $q \neq v$, sequentially, with values $o_{qj}$ according to whether $x_{vo_{qj}} \leq c$ or $x_{vo_{qj}} > c$, for $j = 1, \dots, n$.

Because the data is processed in sorted order, the ordering will be preserved in each of the new matrices $\mathbf{O}^{\leq}$ and $\mathbf{O}^{>}$. This strategy was first presented in \cite{mehta1996sliq} in the context of classification algorithms and has been rediscovered a number of times since then. The pre-sorting and sifting $\mathbf{O}$ strategy is easy to implement for continuous covariates, but not for categorical covariates due to the possibility of ties in the data. Appendix \ref{categorical_covariates} describes a special data structure for dealing with ties efficiently.

\subsection{Adaptive cutpoint grid}
Evaluating the integrated likelihood criterion is straightforward, but the summation and normalization required to sample the cutpoints contribute a substantial computational burden itself. Therefore, it is helpful to consider a restricted number of cutpoints $C$. 

This can be achieved simply by taking every $j$th value (starting from the smallest) as an eligible cutpoint with $j = \lfloor \frac{n_b-2}{C} \rfloor$. As the tree grows deeper, the amount of data that is skipped over diminishes. Eventually, we get $n_b < C$, and each data point defines a unique cutpoint. In this way, the data could, without regularization, be fit perfectly, even though the number of cutpoints at any given level is given an upper limit. As a default, we set the number of cutpoints to $\min{(n,100)}$, where $n$ is the sample size of the entire data set.

Our cutpoint subsampling strategy is more straightforward than the elaborate cutpoint subselection search heuristics used by XGBoost \citep{chen2016xgboost} and  LightGBM \citep{ke2017lightgbm}, which both consider the gradient evaluated at each cutpoint when determining the next split. Our approach does not consider the response information at all but rather defines a predictor-dependent prior on the response surface. That is, given a design matrix $\mathbf{X}$, sample functions can be drawn from the prior distribution by sampling trees, splitting uniformly at random among the cutpoints defined by the node-specific quantiles, in a sequential fashion.

\subsection{Variable importance weights}

XBART can strike a balance between local BART updates, which randomly consider one variable at a time, and the all-variables Bayes rule described above by only considering $m \leq V$ variables when evaluating the cutpoints. Rather than drawing these $m$ variables uniformly at random, as is done in random forests, we introduce a parameter vector $\w$, which denotes the prior probability that a given variable is chosen to be split on, as suggested in \cite{linero2018bayesian2}. Before sampling each cutpoint, we randomly select $m$ variables (without replacement) with probability proportional to $\w$.

The variable weight parameter $\w$ is given a Dirichlet prior with hyper-parameters $\bar{\w}$ that is initialized to all ones. At each iteration of the first sweep through the forest, $\bar{\w}$ is incremented to count the total number of splits across all trees. The split counts are then updated in between each tree sampling/growth step:
\begin{equation*}
	\bar{\w} \leftarrow \bar{\w} - \bar{\w}_l^{(k-1)} +\bar{\w}_l^{(k)}
\end{equation*}
where $\bar{\w}_l^{(k)}$ denotes the length-$V$ vector recording the number of splits on each variable in tree $l$ at iteration $k$. The weight parameter is then re-sampled as
$\w \sim \mbox{Dirichlet}(\bar{\w}).$
Splits that improve the likelihood function will be chosen more often than those that do not. The parameter $\w$ is then updated to reflect that, making chosen variables more likely to be considered in subsequent sweeps. In practice, we find it is helpful to use all $V$ variables during an initialization phase, to more rapidly obtain an accurate initial estimate of $\w$.

\subsection{Categorical covariates}\label{categorical_covariates}
Section \ref{pre_sort} suggests pre-sorting covariates to compute sufficient statistics efficiently, this strategy is straightforward for continuous covariates. However, because of possible ties in ordered categorical covariates, a more efficient algorithm is needed to calculate sufficient statistics.

We restate notations in section \ref{pre_sort}. Without loss of generality, we assume that all covariates are categorical. Let $\mathbf{O}$ denote the $V$-by-$n$ array such that $o_{vh}$ denotes the index, in the data, of the observation with the $h$-th smallest value of the $v$-th predictor variable $x_v$. Then, taking the cumulative sums gives
\begin{equation*}
	s(\leq,v,c) = \sum_{h \leq c} \res_{o_{vh}}
\end{equation*}
and
\begin{equation*}
	s(>,v,c) = \sum_{h = 1}^n r_{lh}-  s(\leq,v,c).
\end{equation*}

\begin{algorithm*}[h!]
	\begin{algorithmic}[1]
		\caption{Pseudocode of calculating sufficient statistics for categorical covariates.}\label{alg:categorical}
		\State Sort categorical covariates, create $\mathbf{O}$ matrix. Count number of unique observations \texttt{unique\_val} and \texttt{val\_count} vector (suppose vectors are length $K$).
		\For{i from 1 to K}
		\State Calculate sufficient statistics for cutpoint candidate \texttt{unique\_val}[i] as
		\begin{equation*}
			s(\leq, v, \texttt{unique\_val}[i]) = \sum_{h \in \left[\sum_{m = 1}^{i - 1}\texttt{val\_count[m]}, \sum_{m = 1}^{i}\texttt{val\_count[m]}\right]} r_{o_{vh}}.
		\end{equation*} and
		\begin{equation*}
			s(>,v,c) = \sum_{h = 1}^n r_{lh}-  s(\leq,v,c).
		\end{equation*}
		\EndFor
		\State Calculate split criterion, determine a cutpoint.
		\If{stop-splitting is selected or stop conditions are reached}
		\State Draw leaf parameters and \textbf{return}.
		\Else
		\State Sift \texttt{unique\_val} and \texttt{val\_count} for left and right child nodes. Repeat step 3 when evaluate split criterion at child nodes.
		\EndIf
	\end{algorithmic}
\end{algorithm*}

The subscript $l$ on the residual indicates that these evaluations pertain to the update of the $l$th tree. Notice that when covariates are categorical, $x_{vh}$ is not necessarily smaller than $x_{v(h+1)}$ due to potential ties in $x$. As a result, the number of unique cutpoint candidates is less than $n$. We propose an extra data structure to bookkeeping unique cutpoint and number of ties as follows. For the $v$-th categorical predictor variable $x_v$, let \texttt{unique\_val} be a vector of unique values (sorted, from small to large) in $x_v$ and \texttt{val\_count} be a vector of counts of replication for each unique value. Therefore, the cutpoint candidate is a element in the vector \texttt{unique\_val}, say the $i$-th element. Then the cumulative sums is
\begin{equation*}
	s\left(\leq, v, \texttt{unique\_val}[i]\right) = \sum_{h \in \left[\sum_{m = 1}^{i - 1}\texttt{val\_count[m]}, \sum_{m = 1}^{i}\texttt{val\_count[m]}\right] } r_{o_{vh}}.
\end{equation*}
When sifting data to left and right child after drawing a cutpoint, we create the same \texttt{unique\_val} and \texttt{val\_count} vector for all categorical covariates with data in two child nodes respectively. See Algorithm \ref{alg:categorical} for details.

\section{Demo of XBART forest algorithm}\label{sec:demo:XBART}
Figure \ref{fig:demo} depicts the fitting process of a simple XBART forest with three trees and two sweeps for Gaussian regression. We label the fitting target of each tree in each sweep explicitly.
\begin{figure}[!htbp]
	\centering
	\includegraphics[width = 0.71\textwidth]{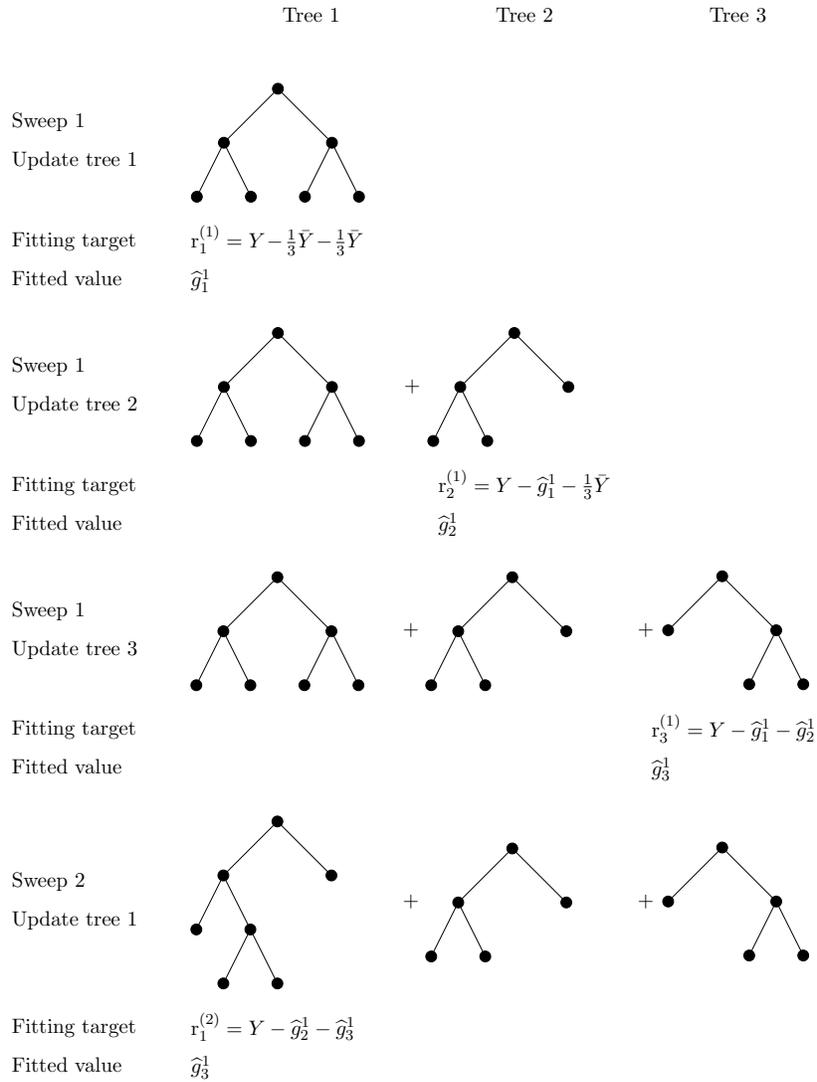}
	\caption{A simple demonstration of the XBART forest fitting procedure, where $\bar{Y}$ denote average of all $Y$ observations. It shows how to initialize the fitted values for the three trees in the first sweep, and update the first tree in sweep 2. The following sweeps proceed similarly. }\label{fig:demo}
\end{figure}

\section{Convergence of the empirical split criterion (Lemma 2)}\label{sec:convergence}
We show that the empirical split criterion in equation (\ref{eqn:split_criterion}) converges to the theoretical one (\ref{eqn:theoretical_split}). Note that $\gamma_{jk}$ is a draw from Gumbel(0,1) distribution.
\begin{equation}\label{eqn:appendix:criterion}
\begin{aligned}
	L_n(c_{jk}) &= \frac{l(c_{jk})}{n} + \frac{\gamma_{jk}}{n} \\
	&= \frac{1}{n}\frac{\tau n_{jk}^l}{\sigma^2 \left(\sigma^2 + \tau n_{jk}^l \right)}\left(\sum_{i:\x_i \in \mathcal{A}_L(j,k)}y_i^2 - \sum_{i:\x_i \in \mathcal{A}_L(j,k)} (y_i - \bar{y}_l)^2 \right)                 \\
		          & \qquad + \frac{1}{n}\frac{\tau n_{jk}^r}{\sigma^2 \left(\sigma^2 + \tau n_{jk}^r \right)}\left( \sum_{i:\x_i \in \mathcal{A}_R(j,k)} y_i^2 - \sum_{i:\x_i \in \mathcal{A}_R(j,k)} (y_i - \bar{y}_r)^2 \right) \\
		          & \qquad\qquad +  \frac{1}{n}\log{ \left( \frac{\sigma^2}{\sigma^2 + \tau n_{jk}^l} \right)} +  \frac{1}{n}\log{ \left( \frac{\sigma^2}{\sigma^2 + \tau n_{jk}^r} \right)} + \frac{1}{n}\gamma_{jk}.
		          \end{aligned}
\end{equation}
As $n\rightarrow \infty$, since $n_{jk}^l < n$, it is straight forward that the last three terms converge to zero,
\begin{equation*}
\frac{1}{n}\log{ \left( \frac{\sigma^2}{\sigma^2 + \tau n_{jk}^l} \right)} \rightarrow 0, \quad \frac{1}{n}\log{ \left( \frac{\sigma^2}{\sigma^2 + \tau n_{jk}^r} \right)} \rightarrow 0, \quad  \frac{1}{n}{\gamma_{jk}} \rightarrow 0.
\end{equation*}
Furthermore, notice that $\frac{n_{jk}^l}{n}$ converges to $P(\x^{(j)} \leq c_{jk})$, the probability that a new observation falls in the left child, and similarly for $\frac{n_{jk}^r}{n}$. The first term of equation (\ref{eqn:appendix:criterion}) converges to
\begin{equation*}
\begin{aligned}
&\frac{1}{n}\frac{\tau n_{jk}^l}{\sigma^2 \left(\sigma^2 + \tau n_{jk}^l \right)}\left(\sum_{i:\x_i \in \mathcal{A}_L(j,k)}y_i^2 - \sum_{i:\x_i \in \mathcal{A}_L(j,k)} (y_i - \bar{y}_l)^2 \right)   \\
=& \frac{\tau n_{jk}^l}{\sigma^2 \left(\sigma^2 + \tau n_{jk}^l \right)}  \frac{n_{jk}^l}{n} \left(\frac{1}{n_{jk}^l}\sum_{i:\x_i \in \mathcal{A}_L(j,k)}y_i^2 - \frac{1}{n_{jk}^l} \sum_{i:\x_i \in \mathcal{A}_L(j,k)} (y_i - \bar{y}_l)^2 \right)   \\
\rightarrow&\frac{1}{\sigma^2} P(\x^{(j)} \leq c_{jk}) \left[ \mathbb{E}(y^2 \mid \x^{(j)} \leq c_{jk}) - \mathbb{V}(y \mid \x^{(j)} \leq c_{jk}) \right] \\
=& \frac{1}{\sigma^2}P(\x^{(j)} \leq c_{jk})  \left(\mathbb{E}(y \mid \x^{(j)} \leq c_{jk})\right)^2
\end{aligned}
\end{equation*}
Similary, the second term of equation (\ref{eqn:appendix:criterion}) converges to 
\begin{equation*}
\begin{aligned}
& \frac{1}{n}\frac{\tau n_{jk}^r}{\sigma^2 \left(\sigma^2 + \tau n_{jk}^r \right)}\left( \sum_{i:\x_i \in \mathcal{A}_R(j,k)} y_i^2 -  \sum_{i:\x_i \in \mathcal{A}_R(j,k)} (y_i - \bar{y}_r)^2 \right)\\
 \rightarrow&\frac{1}{\sigma^2}P(\x^{(j)} > c_{jk})  \left(\mathbb{E}(y \mid \x^{(j)} > c_{jk})\right)^2
\end{aligned}
\end{equation*}
Taken together, as $n\rightarrow \infty$, 
\begin{equation*}
\begin{aligned}
L_n(c_{jk})  &\rightarrow \frac{1}{\sigma^2}\left[ P(\x^{(j)} \leq c_{jk}) \left( \mathbb{E}(y \mid \x^{(j)} \leq c_{jk})\right)^2  + P(\x^{(j)} > c_{jk}) \left( \mathbb{E}(y \mid \x^{(j)} > c_{jk}) \right)^2 \right].
\end{aligned}
\end{equation*}

\section{Complete simulation results}\label{sec:appendix:simulation}

We compare to leading machine learning algorithms: random forests, gradient boosting machines, neural networks. All implementations had an {\tt R} interface and were the current fastest implementations to our knowledge: {\tt ranger} \citep{wright2015ranger}, {\tt xgboost} \citep{chen2016xgboost}, and {\tt Keras} \citep{chollet2015keras} respectively. For {\tt Keras} we used a single architecture but varied the number of training epochs depending on the noise level of the problem. For {\tt xgboost} we consider two specifications, one using the software defaults and another determined by a 5-fold cross-validated grid optimization (see Table \ref{tab:f7}); a reduced grid of parameter values was used at sample sizes $n > 10,000$. 

\begin{table}[!htbp]
\small
	\centering
	\begin{tabular}{l|rr}
		\toprule
		Parameter name           & $N = 10$K                   & $N > 10$K                 \\
		\hline
		{\tt eta}                & $\lbrace 0.1, 0.3\rbrace$   & $\lbrace 0.1, 0.3\rbrace$ \\

		{\tt max\_depth}         & $\lbrace 4, 8, 12\rbrace$   & $\lbrace 4, 12\rbrace$    \\
		{\tt colsample\_bytree}  & $\lbrace 0.7, 1 \rbrace$    & $\lbrace 0.7, 1 \rbrace$  \\
		{\tt min\_child\_weight} & $\lbrace 1, 10, 15 \rbrace$ & $10$                      \\
		{\tt subsample}          & 0.8                         & 0.8                       \\
		{\tt gamma }             & 0.1                         & 0.1                       \\
		\bottomrule
	\end{tabular}%
	\caption{ Hyperparameter Grid for \texttt{xgboost}.}\label{tab:f7}
\end{table}%

The software used is \texttt{R} version 4.0.3 with  {\tt xgboost} 0.71.2, \texttt{dbarts} version 0.9.1, \texttt{ranger} 0.11.1 and \texttt{keras} 2.3.0.0. The default choice of hyperparameters for {\tt xgboost}  are \texttt{eta} $= 0.3$, {\tt colsample\_bytree} $=1$, {\tt min\_child\_weight} $= 1$ and \texttt{max\_depth} $= 6$. Ranger was fit with \texttt{num.trees} $=500$ and \texttt{mtry} $ = \text{floor}\left(\sqrt{p}\right)$. For {\tt Keras} we build a network with two fully connected hidden layers (15 nodes each) using ReLU activation function, $L_1$ regularization at 0.01, and with 50/20 epochs depending on the signal to noise ratio. The simulation was ran on a cluster with two Intel Xeon Gold 6230 CPU and 320GB memory.

The tables below demonstrate simulation results of XBART, XGBoost with cross validation (XGBCV), XGBoost default parameters (XGB), random forests (RF) and neural networks (NN), on several different settings, including independent regressors (Table \ref{table:ind}), correlated regressors (Table \ref{table:factor}) and fat-tail error (Table \ref{table:t}).

Note that we consider XBART in two cases, fixed $\tau = \text{Var}(\y) / L$, or assign an inverse-Gamma$(3, 0.5 \times \text{Var}(\y) / L)$ and update $\tau$ in between of sweeps. In general, sampling $\tau$ improves the performance dramtically on high noise cases $\kappa = 10$, but the RMSE is slightly higher for the huge $n$ cases such as $n = 250k, \kappa = 1$.

\newpage
\begin{table}[!htbp]
  \centering
  \resizebox{\columnwidth}{!}{%
    \begin{tabular}{rc|cccccc|cccccc}
      \toprule
      \multicolumn{3}{c}{} & \multicolumn{5}{c}{$\kappa = 1$} & \multicolumn{5}{c}{$\kappa = 10$}                                                                                                                                          \\
      \midrule
      \multirow{2}{*}{$p$} & \multirow{2}{*}{$n$} & XBART        & XBART        & \multirow{2}{*}{XGBCV} & \multirow{2}{*}{XGB} & \multirow{2}{*}{RF} & \multirow{2}{*}{NN} & XBART        & XBART        & \multirow{2}{*}{XGBCV} & \multirow{2}{*}{XGB} & \multirow{2}{*}{RF} & \multirow{2}{*}{NN} \\
                           &                      & fixed $\tau$ & sampling $\tau$  &                        &                      &                     &                     & fixed $\tau$ & sampling $\tau$  &                        &                      &                     &                     \\
      \midrule
      \multicolumn{14}{c}{Linear}\\
      \midrule
      500                  & 300                  & 25.94 (3.7)  & 25.38 (3.8)  & 27.16 (3.8)            & 28.53 (0.1)          & 24.87 (0.1)         & 24.14 (2.5)         & 53.42 (3.6)  & 33.7 (3.3)   & 78.2 (3.8)             & 112.24 (0.1)         & 33.08 (0.1)         & 26.57 (1.3)         \\
      100                  & 1k                   & 10.12 (2.1)  & 10.11 (1.9)  & 9.97 (7)               & 10.70 (0.1)           & 10.37 (0.2)         & 12.41 (27)          & 18.55 (1.6)  & 14.27 (1.5)  & 26.56 (7.1)            & 50.71 (0.1)          & 16.03 (0.2)         & 67.26 (11.0)          \\
      1k                   & 500                  & 35.57 (8.5)  & 35.39 (8.7)  & 36.07 (11)             & 40.08 (0.3)          & 35.04 (0.2)         & 34.23 (40.3)        & 67.68 (8.4)  & 45.87 (8.4)  & 103.55 (11.2)          & 148.6 (0.3)          & 49.95 (0.3)         & 172.04 (16.7)       \\
      1k                   & 1k                   & 35.81 (11.3) & 35.5 (13.5)  & 36.27 (21.3)           & 39.86 (0.6)          & 35.21 (0.4)         & 31.93 (28)          & 59.15 (10.6) & 44.88 (12.1) & 85.86 (21.5)           & 161.2 (0.6)          & 43.31 (0.4)         & 208.87 (11.7)       \\
      30                   & 10k                  & 2.14 (8.1)   & 2.12 (5.5)   & 3.09 (28.0)            & 3.23 (0.3)           & 3.63 (1.0)          & 1.32 (27.3)         & 4.98 (3.4)   & 4.71 (4.0)     & 6.26 (25.8)            & 15.46 (0.2)          & 5.99 (1.1)          & 7.12 (11.2)         \\
      30                   & 50k                  & 1.46 (58.4)  & 1.43 (45.3)  & 2.93 (52.7)            & 2.69 (1.2)           & 3.31 (7.3)          & 0.66 (28.9)         & 3.90 (18.2)   & 3.60 (16.5)   & 4.48 (51.1)            & 9.22 (1.2)           & 4.77 (9.2)          & 3.53 (12.3)         \\
      30                   & 250k                 & 0.89 (639.5) & 0.91 (406.1) & 2.75 (540.1)           & 2.45 (11)            & 3.03 (59.7)         & 0.29 (37.1)         & 2.91 (222.7) & 2.72 (153.0)   & 3.91 (504.1)           & 5.40 (10.9)           & 4.11 (83.8)         & 1.87 (15.8)         \\
      \midrule
      \multicolumn{14}{c}{Max}\\
      \midrule
      500                  & 300                  & 1.46 (3.8)   & 1.46 (3.3)   & 1.36 (3.8)             & 1.46 (0.1)           & 1.84 (0.1)          & 2.5 (2.6)           & 4.43 (3.6)   & 2.55 (3.5)   & 7.12 (3.9)             & 10.63 (0.1)          & 2.74 (0.1)          & 5.62 (1.4)          \\
      100                  & 1k                   & 0.92 (1.8)   & 0.91 (1.5)   & 0.92 (7.5)             & 1.2 (0.1)            & 1.43 (0.2)          & 2.67 (27.0)           & 3.62 (1.6)   & 2.57 (1.6)   & 5.06 (7.2)             & 9.88 (0.2)           & 2.98 (0.2)          & 16.48 (11.1)        \\
      1k                   & 500                  & 1.15 (8.8)   & 1.17 (8.5)   & 1.10 (11.4)             & 1.43 (0.3)           & 1.97 (0.3)          & 2.25 (40.6)         & 3.88 (8.4)   & 2.64 (8.5)   & 6.01 (10.8)            & 9.38 (0.3)           & 2.54 (0.2)          & 10.85 (16.5)        \\
      1k                   & 1k                   & 0.91 (11.6)  & 0.94 (15.4)  & 0.97 (21.7)            & 1.28 (0.6)           & 1.99 (0.5)          & 2.33 (27.9)         & 3.48 (10.6)  & 2.34 (12.4)  & 5.28 (21.4)            & 9.72 (0.6)           & 2.67 (0.4)          & 12.94 (11.5)        \\
      30                   & 10k                  & 0.40 (3.9)    & 0.40 (2.3)    & 0.44 (26.5)            & 0.62 (0.2)           & 0.61 (1.1)          & 0.42 (28.4)         & 1.60 (3.1)    & 1.54 (2.6)   & 2.03 (25.4)            & 5.46 (0.2)           & 2.14 (1.1)          & 3.1 (11.3)          \\
      30                   & 50k                  & 0.23 (17.7)  & 0.23 (10.4)  & 0.33 (52.9)            & 0.34 (1.1)           & 0.40 (8.8)           & 0.25 (28.8)         & 0.95 (13.0)  & 0.96 (12.9)  & 1.06 (48.9)            & 2.90 (1.1)            & 1.66 (9.3)          & 1.66 (12.0)         \\
      30                   & 250k                 & 0.14 (166.4) & 0.15 (65.5)  & 0.24 (513.3)           & 0.20 (11.8)           & 0.27 (91.6)         & 0.18 (37.3)         & 0.58 (119.9) & 0.58 (83.0)    & 0.62 (503)             & 1.46 (10.8)          & 1.37 (104.4)        & 0.81 (15.9)         \\
      \midrule
      \multicolumn{14}{c}{Single Index}\\
      \midrule
      500                  & 300                  & 6.18 (3.8)   & 6.22 (3.5)   & 8.11 (3.6)             & 8.19 (0.1)           & 8.40 (0.1)           & 20.43 (2.6)         & 19.76 (3.7)  & 14.7 (3.3)   & 26.82 (3.9)            & 40.52 (0.1)          & 12.23 (0.1)         & 34.81 (1.4)         \\
      100                  & 1k                   & 3.95 (1.8)   & 4.00 (1.7)      & 5.25 (7.1)             & 5.98 (0.1)           & 6.00 (0.2)             & 10.51 (26.9)        & 12.96 (1.6)  & 9.48 (1.5)   & 18.33 (7.3)            & 35.46 (0.1)          & 11.05 (0.2)         & 38.59 (11)          \\
      1k                   & 500                  & 4.84 (8.7)   & 4.75 (8.3)   & 7.17 (10.8)            & 7.59 (0.3)           & 7.89 (0.3)          & 14.07 (40.6)        & 14.65 (8.3)  & 10.14 (8.3)  & 23.81 (11.1)           & 38.53 (0.3)          & 11.6 (0.2)          & 39.23 (16.4)        \\
      1k                   & 1k                   & 4.08 (12.0)    & 4.12 (12.1)  & 5.93 (21.1)            & 6.78 (0.6)           & 7.86 (0.4)          & 12.01 (27.9)        & 12.96 (10.5) & 9.98 (12.8)  & 20.41 (21.7)           & 35.02 (0.6)          & 10.54 (0.4)         & 50 (11.6)           \\
      30                   & 10k                  & 2.38 (5.8)   & 2.30 (4.4)    & 2.79 (25.5)            & 3.36 (0.3)           & 3.73 (1.1)          & 2.66 (27.5)         & 6.41 (3.2)   & 6.09 (2.6)   & 8.25 (25.4)            & 20.61 (0.2)          & 8.06 (1.1)          & 8.49 (11.2)         \\
      30                   & 50k                  & 1.68 (41.5)  & 1.66 (33.3)  & 2.22 (49.5)            & 2.35 (1.2)           & 3.08 (7.9)          & 1.96 (29.1)         & 4.45 (15.2)  & 4.41 (13.0)    & 5.51 (49.9)            & 11.71 (1.1)          & 6.62 (9.3)          & 6.45 (12.1)         \\
      30                   & 250k                 & 1.23 (518.4) & 1.22 (326.6) & 1.99 (489.2)           & 1.81 (11.0)            & 2.59 (73.0)           & 1.65 (37.8)         & 3.11 (176.3) & 2.98 (116.9) & 4.58 (508.5)           & 6.37 (9.7)           & 5.62 (94.4)         & 4.61 (16.0)           \\
      \midrule
      \multicolumn{14}{c}{Trig+Poly}\\
      \midrule
      500                  & 300                  & 4.39 (3.9)   & 5.03 (3.3)   & 5.20 (3.9)              & 5.74 (0.1)           & 5.41 (0.1)          & 7.08 (2.6)          & 12.02 (3.6)  & 7.84 (3.3)   & 15.34 (3.9)            & 25.23 (0.1)          & 7.56 (0.1)          & 9.55 (1.3)          \\
      100                  & 1k                   & 3.00 (1.8)      & 3.14 (1.6)   & 3.82 (7.3)             & 4.39 (0.2)           & 4.70 (0.2)           & 9.18 (27)           & 8.71 (1.6)   & 6.23 (1.5)   & 12.87 (7.3)            & 25.29 (0.1)          & 7.59 (0.2)          & 37.80 (11.1)         \\
      1k                   & 500                  & 3.26 (9.0)     & 3.13 (8.5)   & 4.18 (11.3)            & 4.54 (0.3)           & 5.01 (0.2)          & 6.46 (40.5)         & 9.81 (8.4)   & 6.32 (8.3)   & 14.6 (11.0)              & 23.34 (0.3)          & 6.22 (0.2)          & 26.66 (16.5)        \\
      1k                   & 1k                   & 2.90 (12.2)   & 2.85 (13.3)  & 4.04 (21.8)            & 4.74 (0.6)           & 5.21 (0.5)          & 7.22 (27.8)         & 9.27 (10.7)  & 6.43 (12.1)  & 13.29 (21.3)           & 23.37 (0.6)          & 6.36 (0.5)          & 33.26 (11.6)        \\
      30                   & 10k                  & 1.25 (5.0)   & 1.52 (3.9)   & 2.42 (26)              & 2.77 (0.2)           & 3.26 (1.1)          & 3.73 (27.3)         & 4.57 (3.1)   & 4.53 (2.6)   & 5.61 (25.4)            & 13.62 (0.2)          & 5.89 (1.2)          & 7.72 (11.3)         \\
      30                   & 50k                  & 0.74 (27.4)  & 0.75 (15.0)    & 2.30 (49.9)             & 1.96 (1.1)           & 2.83 (9.2)          & 3.26 (29.0)         & 3.06 (14.3)  & 3.07 (12.8)  & 4.02 (48.3)            & 7.44 (1.1)           & 4.92 (9.5)          & 5.56 (12.3)         \\
      30                   & 250k                 & 0.48 (273.1) & 0.50 (134.3) & 1.72 (504.3)           & 1.18 (10.8)          & 2.45 (93.1)         & 2.20 (37.6)          & 2.28 (146.6) & 2.01 (104.4) & 3.35 (507.1)           & 4.22 (9.7)           & 4.24 (103.6)        & 4.12 (16.2)         \\
      \bottomrule
    \end{tabular}%
  }
  \caption{Root mean squared error (RMSE) and running time in seconds (in parenthesis). Column $p$ and $n$ are number of regressors and observations respectively. Regressors $X$ independent and the noise $\epsilon$ is Gaussian. The left panel is for noise level $\kappa = 1$ and the right panel is for higher noise level $\kappa = 10$. }\label{table:ind}
\end{table}%

\newpage

\begin{table}[!htbp]
  \centering
  \resizebox{\columnwidth}{!}{%
    \begin{tabular}{rc|cccccc|cccccc}
      \toprule
      \multicolumn{3}{c}{} & \multicolumn{5}{c}{$\kappa = 1$} & \multicolumn{5}{c}{$\kappa = 10$}                                                                                                                                          \\
      \midrule
      \multirow{2}{*}{$p$} & \multirow{2}{*}{$n$} & XBART        & XBART        & \multirow{2}{*}{XGBCV} & \multirow{2}{*}{XGB} & \multirow{2}{*}{RF} & \multirow{2}{*}{NN} & XBART        & XBART        & \multirow{2}{*}{XGBCV} & \multirow{2}{*}{XGB} & \multirow{2}{*}{RF} & \multirow{2}{*}{NN} \\
                           &                      & fixed $\tau$ & sampling $\tau$  &                        &                      &                     &                     & fixed $\tau$ & sampling $\tau$  &                        &                      &                     &                     \\
      \midrule
      \multicolumn{14}{c}{Linear}\\
      \midrule
      500                  & 300                  & 40.11 (3.9)  & 40.32 (3.5)  & 40.22 (3.9)            & 44.36 (0.1)          & 38.62 (0.1)         & 28.90 (2.7)          & 94.38 (3.7)  & 65.83 (3.4)  & 122.51 (3.8)           & 190.81 (0.1)         & 63.50 (0.1)          & 41.00 (1.3)            \\
      100                  & 1k                   & 15.33 (2.0)    & 15.37 (1.7)  & 12.57 (7.3)            & 13.73 (0.2)          & 13.23 (0.2)         & 19.19 (26.9)        & 33.48 (1.6)  & 21.15 (1.6)  & 43.43 (7.1)            & 77.66 (0.2)          & 21.41 (0.2)         & 101.98 (11.1)       \\
      1k                   & 500                  & 61.64 (8.5)  & 60.73 (9.5)  & 61.16 (10.8)           & 65.83 (0.3)          & 58.92 (0.2)         & 45.1 (40.7)         & 121.94 (8.5) & 74.34 (9.4)  & 174.87 (10.9)          & 272.60 (0.3)          & 78.78 (0.3)         & 290.26 (16.6)       \\
      1k                   & 1k                   & 58.97 (12.2) & 58.70 (13.8)  & 58.44 (21.2)           & 63.90 (0.6)           & 56.76 (0.4)         & 45.48 (27.9)        & 97.75 (10.5) & 71.13 (13.4) & 153.25 (21.5)          & 283.11 (0.6)         & 72.73 (0.4)         & 369.02 (11.4)       \\
      30                   & 10k                  & 2.52 (7.0)   & 2.50 (5.6)    & 3.51 (27.8)            & 4.01 (0.3)           & 3.60 (1.0)           & 1.89 (27.5)         & 7.22 (3.3)   & 5.73 (2.7)   & 8.57 (25.9)            & 23.72 (0.2)          & 8.59 (1.1)          & 9.64 (11.6)         \\
      30                   & 50k                  & 1.73 (50.9)  & 1.74 (40.2)  & 2.88 (52.1)            & 3.14 (1.1)           & 3.13 (7.0)            & 0.86 (28.8)         & 4.80 (17.5)   & 4.26 (15.8)  & 5.30 (49.7)             & 13.06 (1.1)          & 6.46 (8.9)          & 4.44 (12.0)           \\
      30                   & 250k                 & 1.13 (614.3) & 1.12 (313.0)   & 2.59 (495.6)           & 2.64 (7.9)           & 2.78 (57.9)         & 0.42 (38.3)         & 3.56 (222.7) & 3.27 (133.3) & 4.43 (498.1)           & 7.22 (16.2)          & 5.51 (82.8)         & 2.18 (15.6)         \\
      \midrule
      \multicolumn{14}{c}{Max}\\
      \midrule
      500                  & 300                  & 1.53 (3.9)   & 1.52 (3.4)   & 1.35 (3.9)             & 1.46 (0.1)           & 2.00 (0.1)             & 2.56 (2.7)          & 5.12 (3.8)   & 3.06 (3.3)   & 6.31 (3.8)             & 9.77 (0.1)           & 2.94 (0.1)          & 7.10 (1.3)           \\
      100                  & 1k                   & 0.94 (1.8)   & 0.92 (1.5)   & 0.87 (7.3)             & 1.22 (0.2)           & 1.46 (0.2)          & 2.82 (27.0)           & 3.40 (1.6)    & 2.60 (1.5)    & 5.32 (7.3)             & 9.70 (0.2)            & 3.28 (0.2)          & 15.59 (11.0)          \\
      1k                   & 500                  & 1.23 (9.0)   & 1.27 (9.8)   & 1.20 (11.1)             & 1.45 (0.3)           & 2.08 (0.3)          & 2.27 (40.8)         & 3.58 (8.4)   & 2.71 (9.6)   & 5.94 (10.7)            & 9.06 (0.3)           & 2.78 (0.2)          & 9.88 (16.7)         \\
      1k                   & 1k                   & 0.98 (11.3)  & 0.94 (13.3)  & 0.98 (21.9)            & 1.24 (0.6)           & 2.05 (0.5)          & 2.46 (28.3)         & 3.65 (10.3)  & 2.71 (14.1)  & 5.36 (21.2)            & 9.25 (0.6)           & 2.71 (0.5)          & 13.27 (11.7)        \\
      30                   & 10k                  & 0.43 (4.0)   & 0.44 (2.2)   & 0.45 (27.1)            & 0.68 (0.2)           & 0.73 (1.1)          & 0.44 (27.7)         & 1.73 (3.1)   & 1.62 (2.6)   & 2.22 (25.6)            & 5.86 (0.2)           & 2.30 (1.1)           & 2.98 (11.4)         \\
      30                   & 50k                  & 0.24 (18.1)  & 0.26 (9.4)   & 0.37 (51.5)            & 0.38 (1.1)           & 0.50 (8.2)           & 0.23 (29.1)         & 1.05 (13.1)  & 1.08 (11.9)  & 1.18 (49.5)            & 3.11 (1.1)           & 1.80 (9.1)           & 1.75 (11.9)         \\
      30                   & 250k                 & 0.16 (171.4) & 0.17 (66.8)  & 0.27 (491.2)           & 0.21 (7.7)           & 0.36 (86.4)         & 0.22 (36.8)         & 0.65 (125.4) & 0.63 (88.9)  & 0.67 (534.6)           & 1.61 (15.1)          & 1.49 (97.4)         & 0.90 (21.8)          \\
      \midrule
      \multicolumn{14}{c}{Single Index}\\
      \midrule
      500                  & 300                  & 5.41 (3.8)   & 5.35 (3.3)   & 7.86 (3.7)             & 7.94 (0.1)           & 8.22 (0.1)          & 19.49 (2.6)         & 19.27 (3.7)  & 12.72 (3.3)  & 26.03 (3.8)            & 40.03 (0.1)          & 11.76 (0.1)         & 32.54 (1.3)         \\
      100                  & 1k                   & 4.08 (1.8)   & 4.08 (1.6)   & 5.56 (7.1)             & 6.38 (0.2)           & 6.32 (0.2)          & 10.04 (27.2)        & 12.72 (1.6)  & 9.43 (1.5)   & 20.03 (7.2)            & 38.18 (0.1)          & 11.68 (0.2)         & 34.88 (11.1)        \\
      1k                   & 500                  & 4.80 (8.7)    & 4.78 (8.2)   & 6.57 (10.4)            & 7.49 (0.3)           & 7.92 (0.2)          & 13.63 (40.6)        & 14.94 (8.4)  & 10.91 (8.3)  & 24.31 (10.9)           & 37.17 (0.3)          & 10.23 (0.2)         & 41.36 (16.4)        \\
      1k                   & 1k                   & 3.97 (12.1)  & 3.92 (12.6)  & 5.92 (21.1)            & 6.48 (0.6)           & 7.51 (0.4)          & 12.29 (28.0)        & 12.43 (10.9) & 9.91 (13.8)  & 20.45 (21.4)           & 35.57 (0.6)          & 9.99 (0.5)          & 50.13 (11.5)        \\
      30                   & 10k                  & 2.53 (5.4)   & 2.58 (3.5)   & 3.10 (26.0)             & 3.78 (0.2)           & 3.73 (1.0)          & 3.17 (27.4)         & 7.30 (3.2)    & 6.88 (2.6)   & 9.46 (25.7)            & 25.15 (0.2)          & 9.73 (1.1)          & 9.59 (11.3)         \\
      30                   & 50k                  & 1.80 (38.9)   & 1.80 (29.3)   & 2.41 (50.4)            & 2.64 (1.2)           & 3.08 (7.6)          & 2.10 (29.7)          & 4.73 (15.1)  & 4.62 (13.4)  & 5.71 (49.8)            & 13.86 (1.1)          & 7.44 (9.3)          & 6.44 (12.0)           \\
      30                   & 250k                 & 1.31 (532.7) & 1.34 (331.2) & 2.16 (522.6)           & 2.08 (7.9)           & 2.60 (74.8)          & 1.75 (38.7)         & 3.43 (187.7) & 3.36 (118.7) & 4.48 (521.6)           & 7.50 (11.3)           & 6.43 (109.6)        & 4.61 (15.6)         \\
      \midrule
      \multicolumn{14}{c}{Trig+Poly}\\
      \midrule
      500                  & 300                  & 3.72 (3.9)   & 4.13 (3.4)   & 4.72 (3.8)             & 5.16 (0.1)           & 4.96 (0.1)          & 6.71 (2.6)          & 11.26 (3.6)  & 7.43 (3.3)   & 17.45 (3.9)            & 24.04 (0.1)          & 7.85 (0.1)          & 10.06 (1.4)         \\
      100                  & 1k                   & 2.89 (1.8)   & 2.78 (1.5)   & 3.68 (7.3)             & 4.43 (0.2)           & 4.64 (0.2)          & 9.44 (27.1)         & 8.12 (1.6)   & 6.20 (1.5)    & 12.42 (7.1)            & 24.91 (0.1)          & 7.80 (0.2)           & 36.26 (11.0)          \\
      1k                   & 500                  & 3.58 (8.9)   & 3.53 (9.9)   & 4.72 (11.0)            & 5.00 (0.3)              & 5.14 (0.2)          & 6.40 (40.5)          & 10.51 (8.3)  & 7.71 (8.3)   & 14.19 (10.8)           & 23.39 (0.3)          & 6.99 (0.2)          & 25.79 (16.4)        \\
      1k                   & 1k                   & 2.92 (12.1)  & 3.02 (12.3)  & 4.28 (21.8)            & 4.80 (0.6)            & 5.26 (0.5)          & 7.24 (27.8)         & 8.04 (10.4)  & 6.48 (13.2)  & 12.73 (21.4)           & 23.55 (0.6)          & 6.72 (0.5)          & 32.92 (11.5)        \\
      30                   & 10k                  & 1.24 (5.1)   & 1.21 (4.1)   & 1.99 (26.0)            & 2.27 (0.2)           & 2.96 (1.1)          & 3.59 (27.4)         & 4.80 (3.2)    & 4.88 (2.7)   & 5.88 (25.4)            & 14.02 (0.2)          & 6.24 (1.1)          & 7.70 (11.3)          \\
      30                   & 50k                  & 0.75 (28.4)  & 0.80 (17.3)   & 1.36 (49.8)            & 1.42 (1.1)           & 2.23 (8.5)          & 2.66 (28.7)         & 3.01 (14.4)  & 3.05 (13.2)  & 3.83 (49.1)            & 7.65 (1.1)           & 4.87 (9.2)          & 5.43 (12)           \\
      30                   & 250k                 & 0.50 (336.4)  & 0.51 (152.1) & 1.26 (498)             & 1.05 (7.6)           & 1.71 (81.5)         & 0.96 (36.9)         & 2.00 (167.1)    & 1.88 (94.3)  & 2.71 (496.9)           & 4.05 (7.7)           & 3.99 (112.7)        & 4.27 (15.6)         \\
      \bottomrule
    \end{tabular}%
  }
  \caption{Root mean squared error (RMSE) and running time in seconds (in parenthesis). Column $p$ and $n$ are number of regressors and observations respectively. Regressors $X$ are correlated with factor structure and the noise $\epsilon$ is Gaussian. The left panel is for noise level $\kappa = 1$ and the right panel is for higher noise level $\kappa = 10$. }\label{table:factor}
\end{table}%

\newpage

\begin{table}[!htbp]
  \centering
  \resizebox{\columnwidth}{!}{%
    \begin{tabular}{rc|cccccc|cccccc}
      \toprule
      \multicolumn{3}{c}{} & \multicolumn{5}{c}{$\kappa = 1$} & \multicolumn{5}{c}{$\kappa = 10$}                                                                                                                                          \\
      \midrule
      \multirow{2}{*}{$p$} & \multirow{2}{*}{$n$} & XBART        & XBART        & \multirow{2}{*}{XGBCV} & \multirow{2}{*}{XGB} & \multirow{2}{*}{RF} & \multirow{2}{*}{NN} & XBART        & XBART        & \multirow{2}{*}{XGBCV} & \multirow{2}{*}{XGB} & \multirow{2}{*}{RF} & \multirow{2}{*}{NN} \\
                           &                      & fixed $\tau$ & sampling $\tau$  &                        &                      &                     &                     & fixed $\tau$ & sampling $\tau$  &                        &                      &                     &                     \\
      \midrule
      \multicolumn{14}{c}{Linear}\\
      \midrule
      500                  & 300                  & 25.42 (3.9)  & 25.13 (3.3)  & 25.70 (3.8)             & 28.31 (0.1)          & 25.03 (0.1)         & 24.24 (2.7)         & 48.86 (3.7)  & 34.46 (3.5)  & 73.34 (3.9)            & 144.63 (0.1)         & 40.43 (0.2)         & 27.04 (1.3)         \\
      100                  & 1k                   & 9.88 (2.1)   & 9.87 (1.7)   & 9.39 (7.7)             & 9.87 (0.2)           & 9.76 (0.2)          & 12.52 (27.0)          & 18.08 (1.7)  & 12.55 (1.6)  & 25.73 (7.4)            & 48.01 (0.1)          & 17.42 (0.2)         & 58.65 (11.0)          \\
      1k                   & 500                  & 37.09 (8.6)  & 36.31 (8.4)  & 37.95 (11.3)           & 39.16 (0.3)          & 35.71 (0.2)         & 33.95 (40.9)        & 41.29 (8.6)  & 37.49 (8.5)  & 46.34 (11.6)           & 62.18 (0.3)          & 36.9 (0.2)          & 56.26 (16.6)        \\
      1k                   & 1k                   & 36.01 (11.8) & 35.53 (14.8) & 36.75 (22.1)           & 38.39 (0.6)          & 35.14 (0.5)         & 31.46 (28.2)        & 58.62 (10.8) & 42.3 (13.3)  & 87.26 (22)             & 153.16 (0.6)         & 45.97 (0.5)         & 199.86 (11.5)       \\
      30                   & 10k                  & 2.13 (8.2)   & 2.14 (5.7)   & 3.17 (26.9)            & 3.29 (0.3)           & 3.68 (1.0)            & 1.37 (27.2)         & 5.34 (3.4)   & 4.81 (2.8)   & 7.11 (24.5)            & 16.37 (0.2)          & 6.73 (1.2)          & 7.08 (11.3)         \\
      30                   & 50k                  & 1.44 (59.7)  & 1.49 (44.1)  & 2.93 (52.4)            & 2.75 (1.2)           & 3.31 (7.1)          & 0.60 (28.8)          & 3.82 (18.5)  & 3.60 (16.9)   & 5.08 (48.8)            & 10.72 (1.1)          & 5.12 (9.6)          & 3.56 (12.0)         \\
      30                   & 250k                 & 0.91 (633.1) & 1.04 (406.2) & 2.8 (483.3)            & 2.52 (8.2)           & 3.02 (59.0)           & 0.34 (37.6)         & 3.18 (221.8) & 3.90 (170.3)  & 5.01 (469.1)           & 8.29 (9.1)           & 4.31 (90.1)         & 1.89 (15.8)         \\
      \midrule
      \multicolumn{14}{c}{Max}          \\
      \midrule
      500                  & 300                  & 1.48 (3.9)   & 1.53 (3.4)   & 1.32 (3.8)             & 1.54 (0.1)           & 1.94 (0.1)          & 2.57 (2.6)          & 4.85 (3.7)   & 3.06 (3.5)   & 6.77 (4.0)               & 10.84 (0.1)          & 3.39 (0.1)          & 5.48 (1.3)          \\
      100                  & 1k                   & 0.81 (1.8)   & 0.79 (1.5)   & 0.82 (7.6)             & 1.14 (0.1)           & 1.35 (0.2)          & 2.68 (26.7)         & 3.35 (1.6)   & 2.46 (1.5)   & 5.07 (7.3)             & 10.64 (0.1)          & 3.79 (0.2)          & 15.17 (11)          \\
      1k                   & 500                  & 1.27 (9.1)   & 1.24 (8.6)   & 1.17 (11.5)            & 1.46 (0.3)           & 1.98 (0.3)          & 2.25 (40.7)         & 1.97 (8.6)   & 1.79 (8.4)   & 2.49 (11.2)            & 4.10 (0.3)            & 2.14 (0.3)          & 3.71 (16.6)         \\
      1k                   & 1k                   & 0.80 (12.1)   & 0.81 (13.4)  & 0.86 (22.9)            & 1.24 (0.6)           & 1.89 (0.5)          & 2.31 (27.8)         & 3.28 (10.6)  & 2.32 (13.5)  & 5.30 (22.2)             & 9.77 (0.6)           & 2.90 (0.5)           & 12.29 (11.5)        \\
      30                   & 10k                  & 0.40 (4.0)      & 0.44 (2.1)   & 0.46 (25.6)            & 0.66 (0.2)           & 0.59 (1.2)          & 0.38 (27.1)         & 1.66 (3.1)   & 1.55 (2.5)   & 2.15 (24.3)            & 5.80 (0.2)            & 2.47 (1.2)          & 2.84 (11.4)         \\
      30                   & 50k                  & 0.24 (18.5)  & 0.28 (9.4)   & 0.34 (50.1)            & 0.41 (1.1)           & 0.38 (9.2)          & 0.24 (28.7)         & 1.00 (13.7)     & 1.40 (11.7)   & 1.20 (47.7)             & 3.58 (1.1)           & 1.88 (10.1)         & 1.56 (11.9)         \\
      30                   & 250k                 & 0.15 (161.9) & 0.18 (69.1)  & 0.26 (448.2)           & 0.23 (7.8)           & 0.25 (91.8)         & 0.19 (37.9)         & 0.58 (119.7) & 0.59 (87.5)  & 0.70 (464.7)            & 1.75 (7.8)           & 1.47 (108.7)        & 0.82 (15.8)         \\
      \midrule
      \multicolumn{14}{c}{Single Index}          \\
      \midrule
      500                  & 300                  & 5.46 (3.8)   & 5.41 (3.3)   & 7.61 (3.8)             & 8.41 (0.1)           & 8.02 (0.1)          & 18.94 (2.8)         & 20.39 (3.7)  & 14.07 (3.4)  & 29.98 (4.1)            & 35.83 (0.1)          & 13.96 (0.1)         & 33.75 (1.3)         \\
      100                  & 1k                   & 4.34 (1.8)   & 4.17 (1.6)   & 5.55 (6.9)             & 6.32 (0.1)           & 6.28 (0.2)          & 10.31 (27.0)          & 13.60 (1.7)   & 9.66 (1.7)   & 18.33 (7.2)            & 36.31 (0.2)          & 13.88 (0.2)         & 35.01 (11.1)        \\
      1k                   & 500                  & 4.77 (8.8)   & 4.74 (9.6)   & 6.91 (10.5)            & 7.68 (0.3)           & 8.13 (0.3)          & 13.47 (40.9)        & 6.94 (8.7)   & 6.84 (8.1)   & 10.93 (11.5)           & 13.47 (0.3)          & 8.68 (0.3)          & 19.72 (16.6)        \\
      1k                   & 1k                   & 4.01 (11.9)  & 3.87 (14.3)  & 6.05 (21.1)            & 6.67 (0.6)           & 7.69 (0.5)          & 12.58 (28.0)          & 13.42 (10.6) & 9.74 (11.9)  & 19.77 (22.1)           & 34.75 (0.6)          & 11.13 (0.5)         & 45.43 (11.5)        \\
      30                   & 10k                  & 2.37 (5.9)   & 2.46 (4.9)   & 2.81 (24.9)            & 3.32 (0.2)           & 3.65 (1.1)          & 2.69 (27.4)         & 6.40 (3.3)    & 7.06 (2.4)   & 8.78 (24.9)            & 23.68 (0.2)          & 9.74 (1.2)          & 9.28 (11.4)         \\
      30                   & 50k                  & 1.65 (42.5)  & 2.46 (31.3)  & 2.26 (49.0)            & 2.47 (1.1)           & 3.05 (8.2)          & 1.93 (28.9)         & 4.58 (15.7)  & 12.21 (11.6) & 6.51 (48.3)            & 14.23 (1.1)          & 7.29 (9.8)          & 6.45 (11.9)         \\
      30                   & 250k                 & 1.22 (524.5) & 1.28 (352.7) & 2.01 (455.1)           & 1.89 (7.9)           & 2.53 (83.2)         & 1.68 (37.2)         & 3.17 (179.0)   & 3.06 (117.5) & 4.75 (468.5)           & 7.85 (8.6)           & 6.00 (101.6)           & 4.66 (16.0)           \\
      \midrule
      \multicolumn{14}{c}{Trig+Poly}          \\
      \midrule
      500                  & 300                  & 4.22 (3.9)   & 4.42 (3.3)   & 4.89 (3.7)             & 5.20 (0.1)            & 5.24 (0.1)          & 6.41 (2.6)          & 11.48 (3.7)  & 7.40 (3.4)    & 14.05 (4.0)              & 31.32 (0.1)          & 7.61 (0.1)          & 7.19 (1.4)          \\
      100                  & 1k                   & 2.80 (1.8)    & 2.90 (1.5)    & 3.68 (7.3)             & 4.27 (0.2)           & 4.60 (0.2)           & 9.14 (26.9)         & 8.36 (1.6)   & 6.21 (1.5)   & 12.82 (7.3)            & 29.21 (0.1)          & 9.64 (0.2)          & 33.32 (11.0)          \\
      1k                   & 500                  & 3.89 (9.2)   & 4.03 (8.7)   & 4.89 (11.5)            & 5.75 (0.3)           & 5.34 (0.3)          & 6.74 (40.5)         & 5.98 (8.7)   & 5.64 (9.5)   & 7.01 (11.6)            & 12.71 (0.3)          & 5.71 (0.3)          & 9.98 (16.6)         \\
      1k                   & 1k                   & 2.86 (11.9)  & 2.93 (12.5)  & 4.11 (21.8)            & 4.88 (0.6)           & 5.24 (0.5)          & 7.02 (28.2)         & 8.81 (10.5)  & 6.79 (11.6)  & 13.44 (22.3)           & 27.91 (0.6)          & 7.91 (0.5)          & 33.00 (11.5)           \\
      30                   & 10k                  & 1.22 (5.0)   & 1.27 (2.9)   & 2.46 (25.7)            & 2.39 (0.2)           & 3.28 (1.2)          & 3.95 (27.4)         & 4.66 (3.2)   & 4.78 (2.5)   & 6.02 (24.6)            & 13.07 (0.2)          & 6.65 (1.2)          & 7.94 (11.3)         \\
      30                   & 50k                  & 0.77 (28.4)  & 1.30 (14.5)   & 2.23 (48.5)            & 2.02 (1.1)           & 2.86 (9.5)          & 3.62 (28.9)         & 3.36 (13.8)  & 3.79 (11.0)    & 4.23 (47.3)            & 8.33 (1.1)           & 5.29 (10.1)         & 5.43 (12.2)         \\
      30                   & 250k                 & 1.77 (247.8) & 1.53 (121.9) & 1.87 (481.3)           & 1.82 (7.7)           & 2.48 (95.8)         & 2.36 (37.3)         & 2.37 (136.8) & 2.89 (100.0)   & 3.59 (466.5)           & 5.86 (7.9)           & 4.51 (110.8)        & 4.21 (15.8)         \\
      \bottomrule
    \end{tabular}%
  }
  \caption{Root mean squared error (RMSE) and running time in seconds (in parenthesis). Column $p$ and $n$ are number of regressors and observations respectively. Regressors $X$ are independent and the noise $\epsilon$ is $t$ distributed with degree of freedom 3. The left panel is for noise level $\kappa = 1$ and the right panel is for higher noise level $\kappa = 10$. }\label{table:t}
\end{table}

\end{document}